\relax

\documentclass[letterpaper]{article} 
\usepackage{aaai21}  
\usepackage{times}  
\usepackage{helvet} 
\usepackage{courier}  
\usepackage[hyphens]{url}  
\usepackage{graphicx} 
\urlstyle{rm} 
\usepackage{natbib}  
\usepackage{caption} 
\frenchspacing  
\setlength{\pdfpagewidth}{8.5in}  
\setlength{\pdfpageheight}{11in}  

\usepackage{url}            
\usepackage{booktabs}       
\usepackage{amsfonts}       
\usepackage{amsmath}
\usepackage{amsthm}
\usepackage{nicefrac}       
\usepackage{microtype}      
\usepackage{graphicx}
\usepackage{float}
\usepackage{subfigure}
\usepackage{mathtools}
\usepackage{multirow}

\usepackage{xcolor}

\newtheorem{proposition}{Proposition}

\newtheorem{remark}{Remark}


\pdfinfo{
/Title (Frequency Adaptation Graph Convolutional Networks)
/Author (Deyu Bo, Xiao Wang, Chuan Shi and Huawei Shen)
/TemplateVersion (2021.1)
} 

\setcounter{secnumdepth}{2} 

%



\title{Beyond Low-frequency Information in Graph Convolutional Networks}

\author{
	Deyu Bo\textsuperscript{\rm 1},
	Xiao Wang\textsuperscript{\rm 1},
	Chuan Shi\textsuperscript{\rm 1}	\thanks{Corresponding author: Chuan Shi (shichuan@bupt.edu.cn)},
	Huawei Shen\textsuperscript{\rm 2}
}

\affiliations{

    \textsuperscript{\rm 1}Beijing University of Posts and Telecommunications\\
    \textsuperscript{\rm 2}CAS Key Lab of Network Data Science and Technology, Institute of Computing Technology,\\
    	Chinese Academy of Sciences, Beijing, China\\
    \{bodeyu, xiaowang, shichuan\}@bupt.edu.cn, shenhuawei@ict.ac.cn
}


%

\begin{document}

\maketitle

\begin{abstract}
Graph neural networks (GNNs) have been proven to be effective in various network-related tasks. Most existing GNNs usually exploit the low-frequency signals of node features, which gives rise to one fundamental question: is the low-frequency information all we need in the real world applications? 
In this paper, we first present an experimental investigation assessing the roles of low-frequency and high-frequency signals, where the results clearly show that exploring low-frequency signal only is distant from learning an effective node representation in different scenarios. How can we adaptively learn more information beyond low-frequency information in GNNs? A well-informed answer can help GNNs enhance the adaptability. We tackle this challenge and propose a novel Frequency Adaptation Graph Convolutional Networks (FAGCN) with a self-gating mechanism, which can adaptively integrate different signals in the process of message passing. 
For a deeper understanding, we theoretically analyze the roles of low-frequency signals and high-frequency signals on learning node representations, which further explains why FAGCN can perform well on different types of networks.
Extensive experiments on six real-world networks validate that FAGCN not only alleviates the over-smoothing problem, but also has advantages over the state-of-the-arts.
\end{abstract}

\section{Introduction}
\label{introduction}

Networks, such as social networks, citation networks and molecular networks, are ubiquitous in the real world.
Recently, the emerging graph neural networks (GNNs) have demonstrated powerful ability to learn node representations by jointly encoding network structures and node features \cite{survey2, survey1, HINSurvey}.  This strategy has been proven to be effective in various tasks, including link prediction \cite{linkprediction}, node classification \cite{GCN, GAT} and graph classification \cite{graphclassification}.


In general, GNNs update node representations by aggregating information from neighbors, which can be seen as a special form of low-pass filter \cite{SGC, label2019li}. Some recent studies \cite{revisiting, GraphHeat} show that the smoothness of signals, i.e., low-frequency information, are the key to the success of GNNs.
However, is the low-frequency information all we need and what roles do other information play in GNNs? This is a fundamental question which motivates us to rethink whether GNNs comprehensively exploit the information in node features when learning node representation.

Firstly, the low-pass filter in GNNs mainly retains the commonality of node features, which inevitably ignores the difference, so that the learned representations of connected nodes become similar. Thanks to the smoothness of low-frequency information, this mechanism may work well for assortative networks, i.e., similar nodes tend to connect with each other \cite{GraphHeat}. 
However, the real-world networks are not always assortative, but sometimes disassortative, i.e., nodes from different classes tend to connect with each other \cite{mixing}. 
For example, the chemical interactions in proteins often occur between different types of amino acids~\cite{H2GNN}.
If we force the representation of connected proteins to be similar by employing low-pass filter, obviously, the performance will be largely hindered. The low-frequency information here is insufficient to support the inference in such networks. Under the circumstances, the high-frequency information, capturing the difference between nodes, may be more suitable. Even the raw features, containing both low- and high-frequency information, are alternative solution \cite{AM-GCN}. Secondly, it is well established that the node representation will becomes indistinguishable when we always utilize low-pass filter, causing over-smoothing \cite{lossexp}. This reminds us that low-pass filter of current GNNs is distant from optimal for real world scenarios.

To provide more evidence for the above analysis, we focus on low-frequency and high-frequency signals as an example, and present experiments to assess their roles (details can be seen in Section \ref{case}). 
The results clearly show that both of them are helpful for learning node representations. Specifically, we find that when a network exhibits disassortativity, high-frequency signals perform much better than low-frequency signals.
This implies that the high-frequency information, which is largely eliminated by the current GNNs, is not always useless, and the low-frequency information is not always optimal for the complex networks.
Once the weakness of low-frequency information in GNNs is identified, a natural question is \emph{how to use signals of different frequencies in GNNs and, at the same time, makes GNNs suitable for different type of networks?}

\begin{figure*}
\centering
\subfigure[Classification accuracy]{
\label{fig:vary_q}
\includegraphics[width=0.31\textwidth]{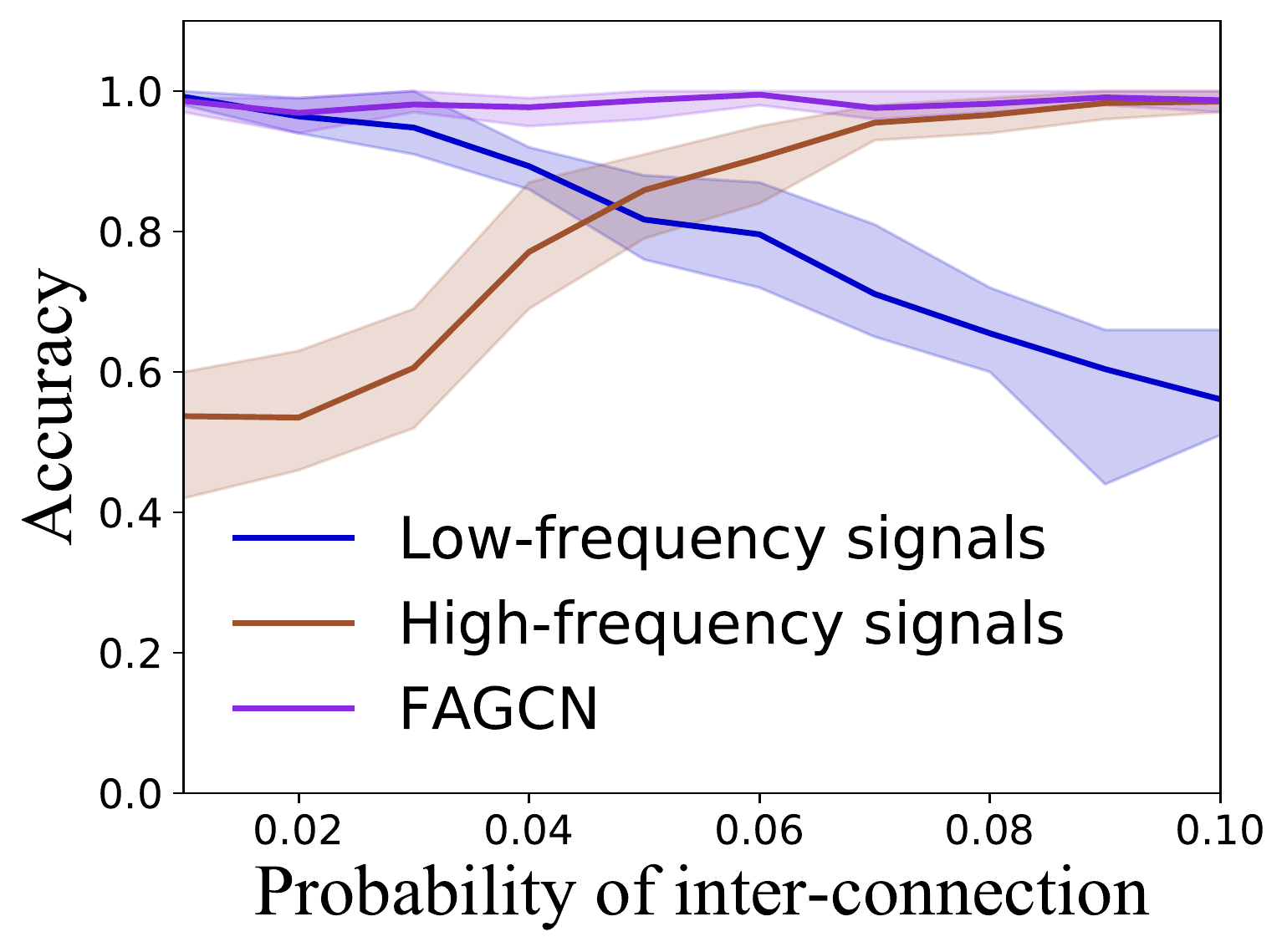}
}
\subfigure[Existing GNNs]{
\label{fig:GNNs}
\includegraphics[width=0.31\textwidth]{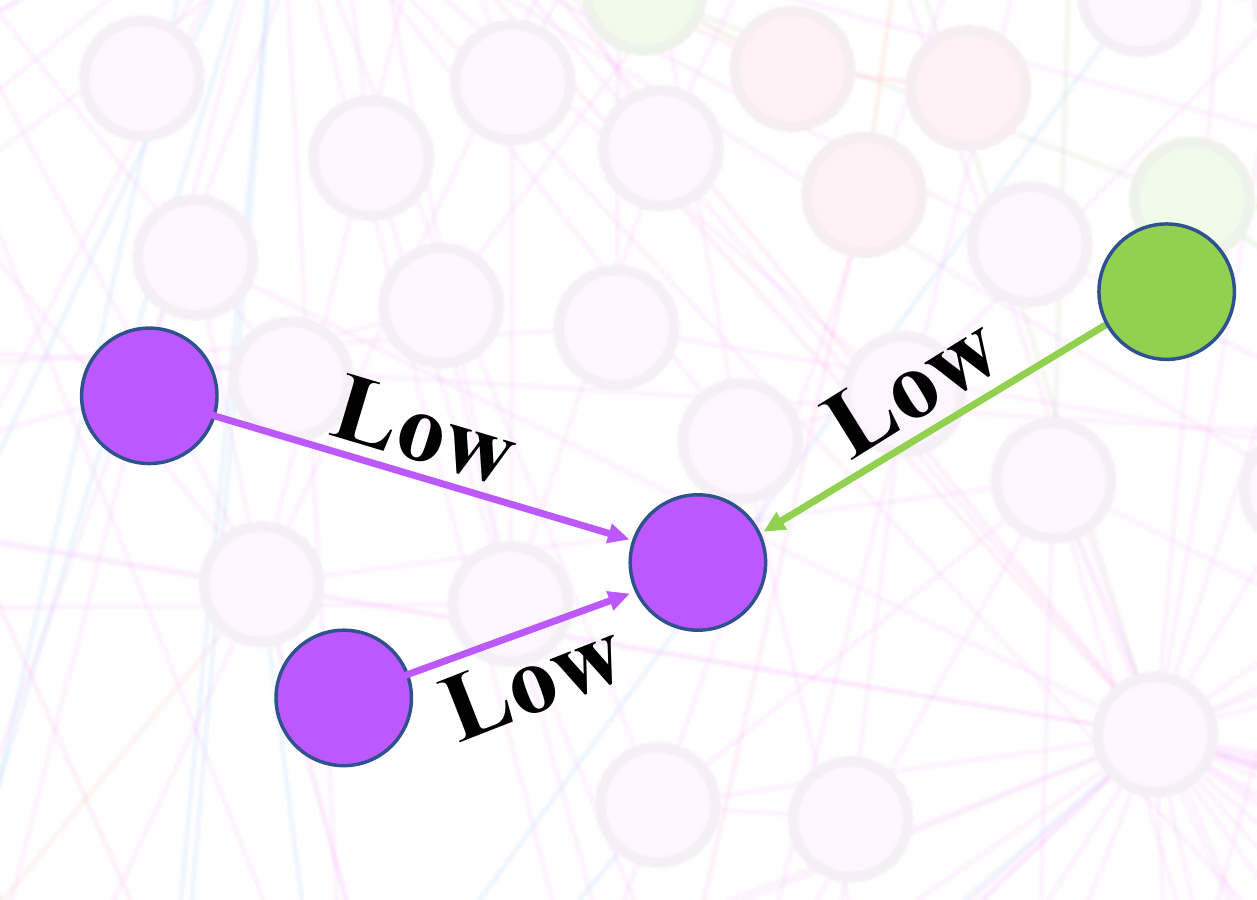}
}
\subfigure[FAGCN]{
\label{fig:FAGCN}
\includegraphics[width=0.31\textwidth]{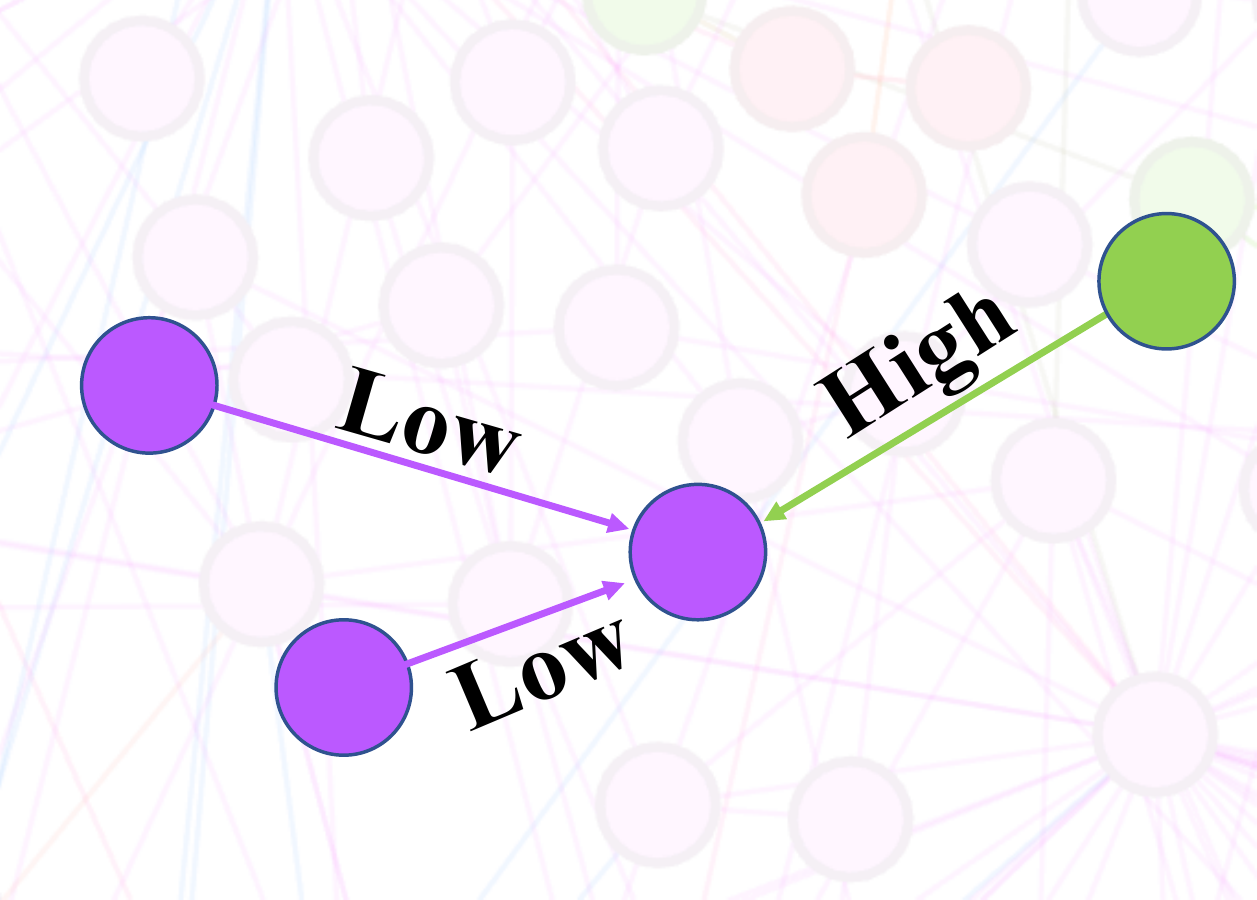}
}
\caption{(a) Classification accuracy of low-frequency signals, high-frequency signals and our model FAGCN. X-axis denotes probability of inter-connection $q$. (b) Existing GNNs aggregate the low-frequency signals of neighbors. (c) FAGCN aggregates the low-frequency signals of neighbors within the same class and high-frequency signals of neighbors from different classes, where the color indicates the node label.}
\label{intro}
\end{figure*}

To answer this question, two challenges need to be solved:
(1) Both the low-frequency and high-frequency signals are the parts of the raw features. Traditional filter is specifically designed for one certain signal, and cannot well extract different frequency signals  simultaneously.
(2) Even we can extract different information, however, the assortativity of real-world networks is usually agnostic and varies greatly, moreover, the correlation between task and different information is very complex, so it is difficult to decide what kind of signals should be used: raw features, low-frequency signals, high-frequency signals or their combination.

In this paper, we design a general frequency adaptation graph convolutional networks called FAGCN, to adaptively aggregate different signals from neighbors or itself.
We first employ the theory of graph signal processing to formally define an enhanced low-pass and high-pass filter to separate the low-frequency and high-frequency signals from the raw features.
Then we design a self-gating mechanism to adaptively integrate the low-frequency signals, high-frequency signals and raw features, without knowing the assortativity of network. 
Theoretical analysis proves that FAGCN is a generalization of most existing GNNs and it has a capability to freely shorten or enlarge the distance between node representations, which further explains why FAGCN can perform well on different types of networks.





The contribution of this paper is summarized as follows:
\begin{itemize}
	\item We study the roles of both low-frequency and high-frequency signals in GNNs and verify that high-frequency signals are useful for disassortative networks.
	\item We propose a novel graph convolutional networks FAGCN, which can adaptively change the proportion of low-frequency and high-frequency signals without knowing the types of networks.
	\item We theoretically prove that the expressive power of FAGCN is greater than other GNNs. Moreover, our proposed FAGCN is able to alleviate the over-smoothing problem. Extensive experiments on six real-world networks validate that FAGCN has advantages over state-of-the-arts.
\end{itemize}

\begin{figure*}
\centering
\subfigure[$\mathcal{F}_{L}$]{
\label{low1}
\includegraphics[width=0.23\textwidth]{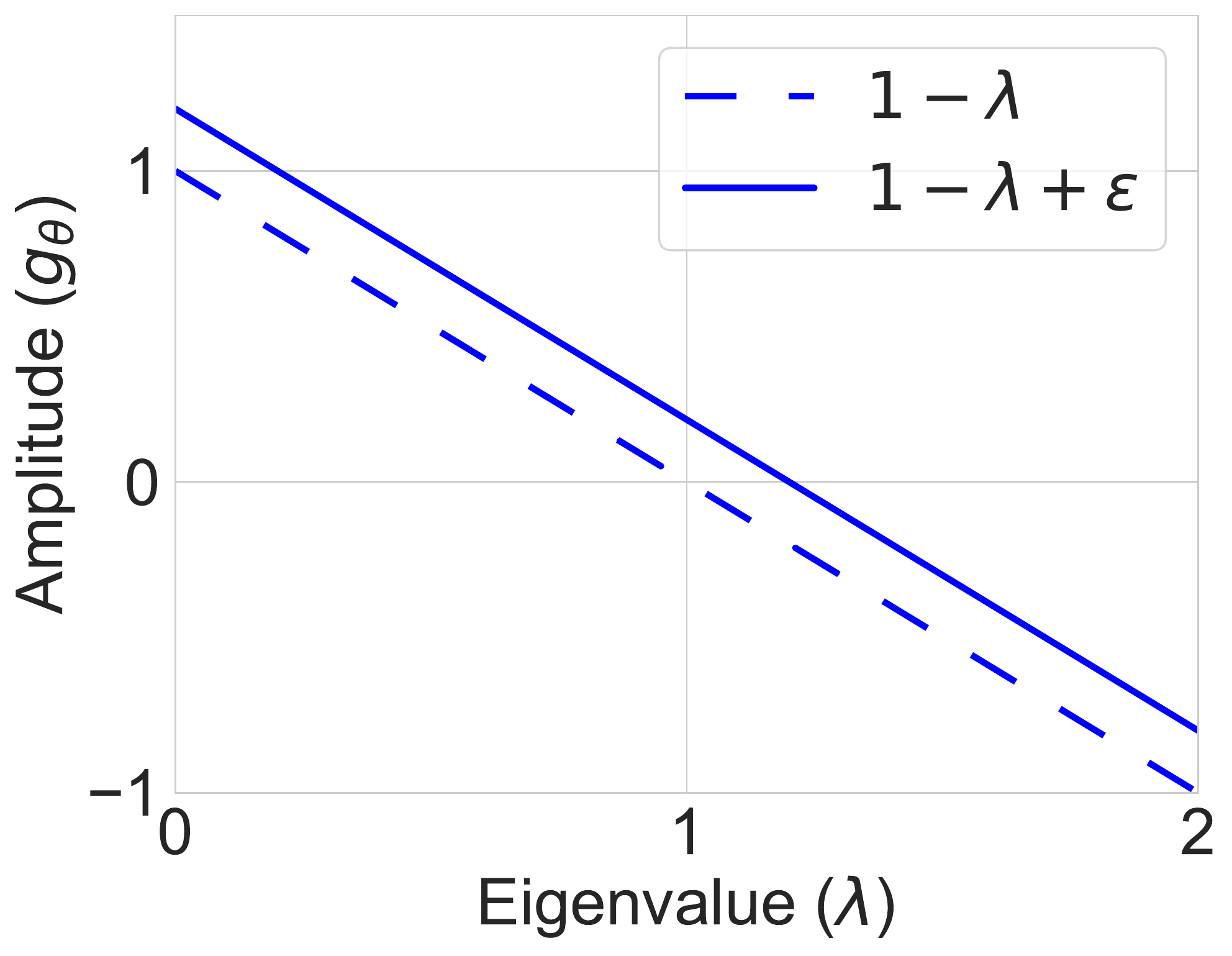}
}
\subfigure[$\mathcal{F}_{L}^{2}$]{
\label{low2}
\includegraphics[width=0.23\textwidth]{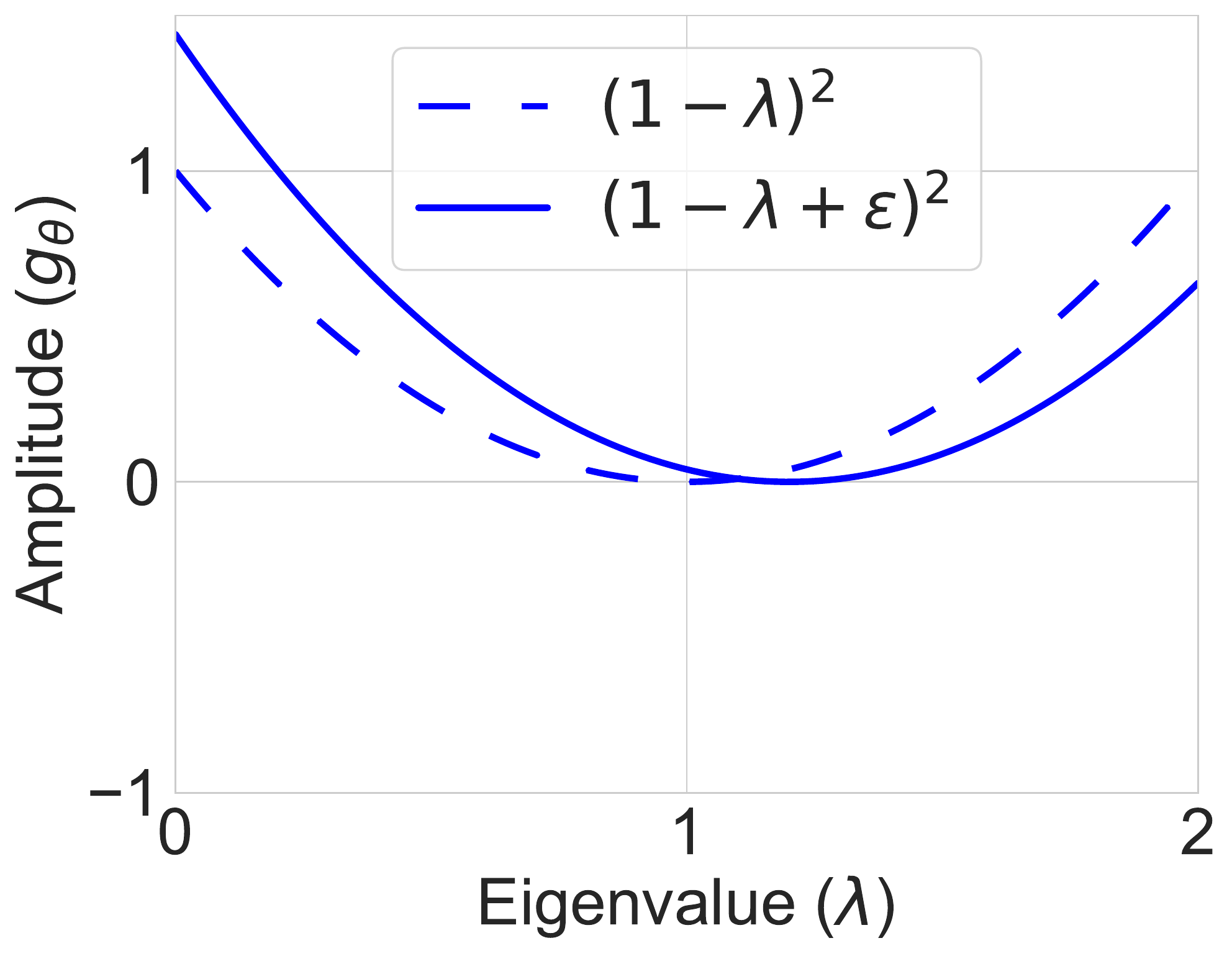}
}
\subfigure[$\mathcal{F}_{H}$]{
\label{high1}
\includegraphics[width=0.23\textwidth]{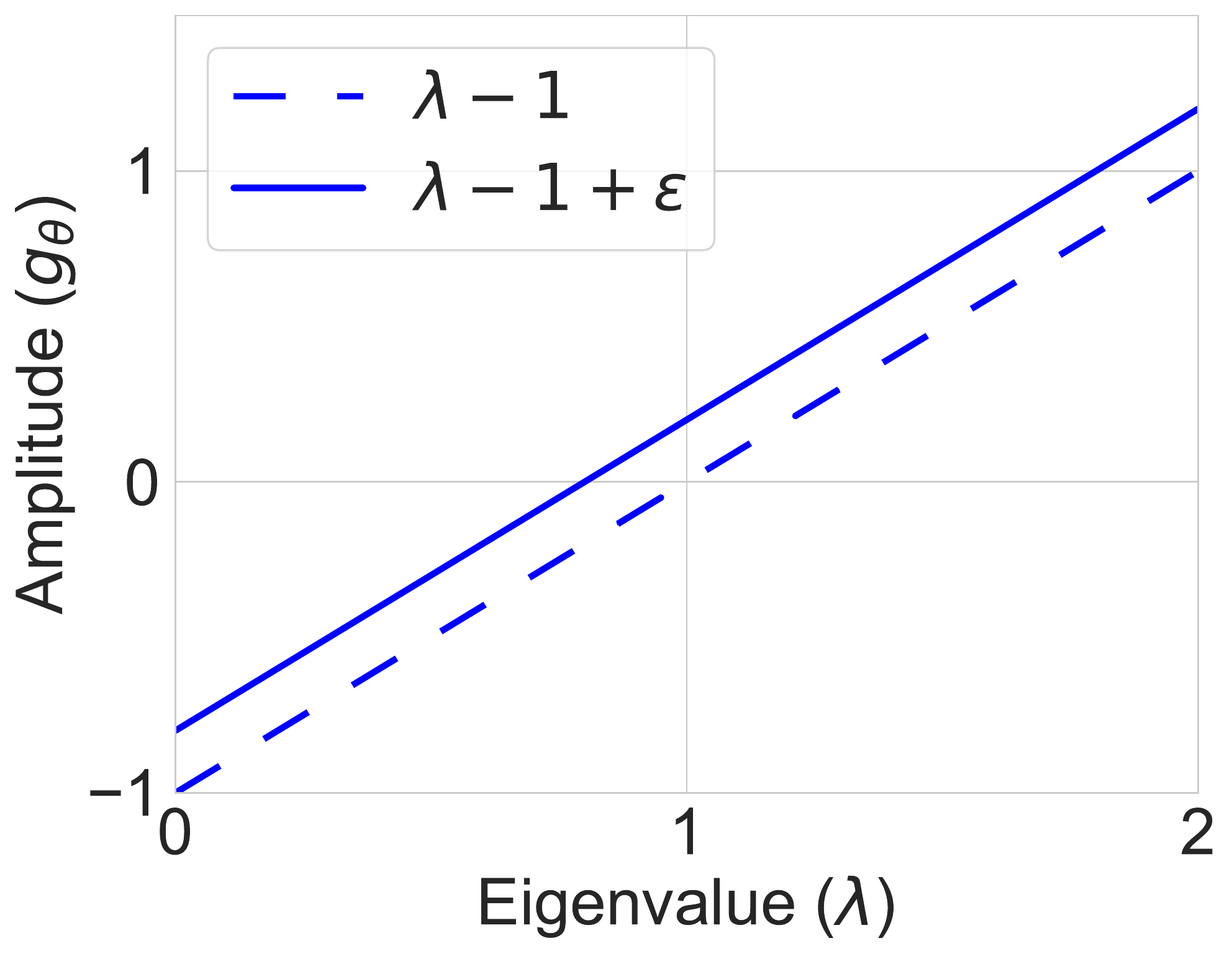}
}
\subfigure[$\mathcal{F}_{H}^{2}$]{
\label{high2}
\includegraphics[width=0.23\textwidth]{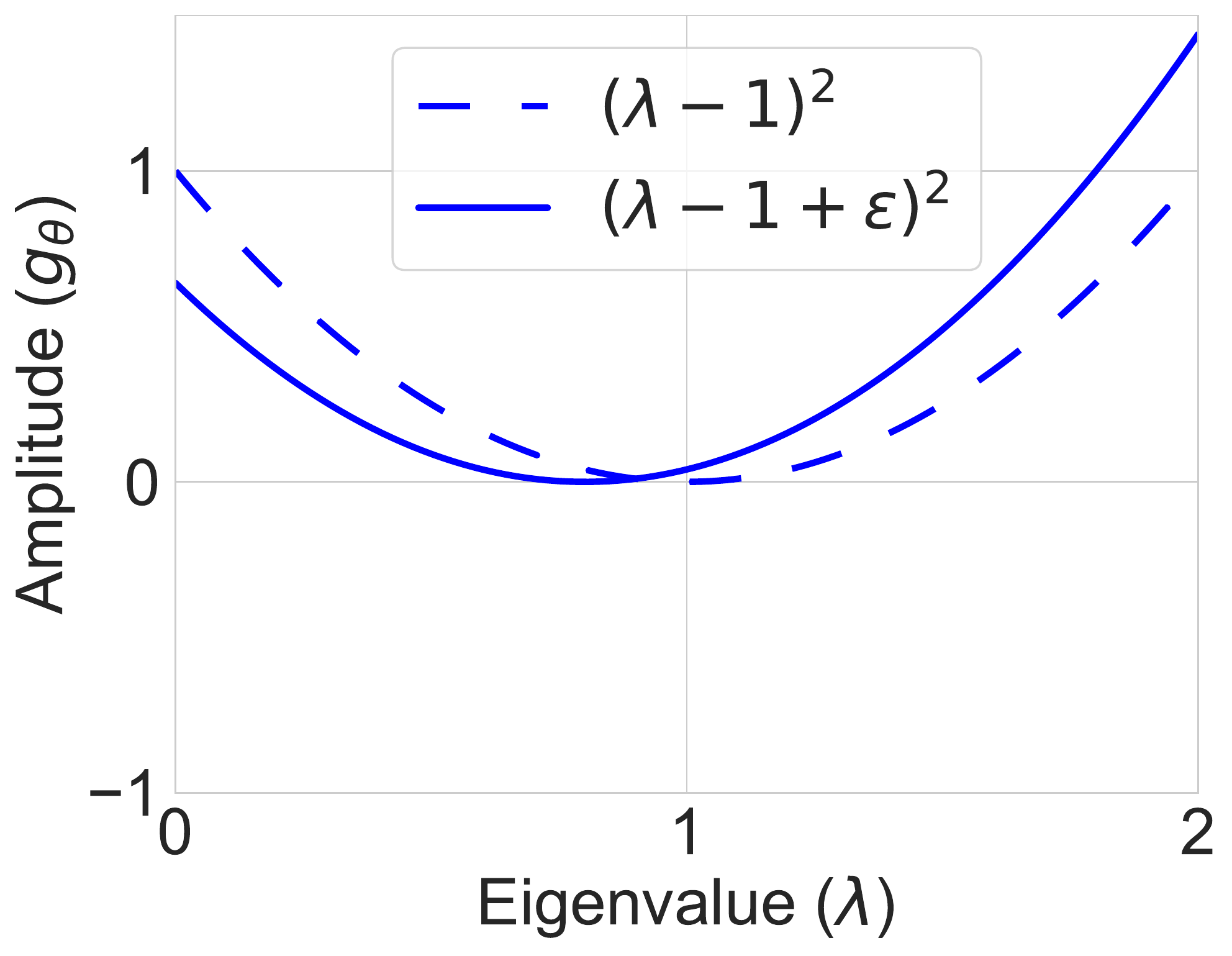}
}
\caption{The relations between eigenvalues and amplitudes in different filters.}
\label{functions}
\end{figure*}

\section{An Experimental Investigation}
\label{case}

In this section, taking the low-frequency and high-frequency signals as an example, we analyze their roles in learning node representations. Specifically, we test their performance of node classification on a series of synthetic networks.
The main idea is to gradually increase the disassortativity of the synthetic networks, and observe how the performance of these two signals changes.
We generate a network with 200 nodes and randomly divide them into 2 classes.
For each node in class one, we sample a 20-dimensional feature vector from Gaussian distribution $\mathcal{N}(0.5,1)$, while for the nodes in class two, the distribution is $\mathcal{N}(-0.5,1)$.
Besides, the connections in the same class are generated from a Bernoulli distribution with probability $p=0.05$, and the probability of connections between two classes $q$ varies from $0.01$ to $0.1$. When $q$ is small, the network exhibits assortativity; As $q$ increases, the network gradually exhibits disassortativity.
We then apply the low-pass and high-pass filters, described in Section \ref{separation}, to node classification task. Half of the nodes are used for training and the remains are used for testing.

Figure \ref{fig:vary_q} illustrates that with the increase of inter-connection $q$, the accuracy of low-frequency signals decreases, while the accuracy of high-frequency signals increases gradually.
This proves that both the low-frequency and high-frequency signals are helpful in learning node representations.
The reason why existing GNNs fail when $q$ increases is that, as shown in Figure \ref{fig:GNNs}, they only aggregate low-frequency signals from neighbors, i.e., making the node representations become similar, regardless of whether nodes belong to the same class, thereby losing the discrimination. 
When the network becomes disassortative, the effectiveness of high-frequency signals appears, but as shown in Figure \ref{fig:vary_q}, a single filter cannot achieve optimal results in all cases. Our proposed FAGCN, which combines the advantages of both low-pass and high-pass filters, can aggregate the low-frequency signals of neighbors within the same class and high-frequency signals of neighbors from different classes, as shown in Figure \ref{fig:FAGCN}, thereby showing the best performance on every synthetic network.

\section{Our Proposed Model: FAGCN}
\label{separation}

Consider an undirected graph $G=(V, E)$ with adjacency matrix $A \in \mathbb{R}^{N \times N}$, where $V$ is a set of nodes with $\left| V \right| = N$ and $E$ is a set of edges. The normalized graph Laplacian matrix is defined as $L=I_{n}-D^{-1/2}AD^{-1/2}$, where $D \in \mathbb{R}^{N \times N}$ is a diagonal degree matrix with $D_{i,i}=\sum_{j}A_{i,j}$ and $I_{n}$ denotes the identity matrix. Because $L$ is a real symmetric matrix, it has a complete set of orthonormal eigenvectors $\{u_{l}\}_{l=1}^{n} \in \mathbb{R}^{n}$, each of which has a corresponding eigenvalue $\lambda_{l} \in [0, 2]$ \cite{spectralgraph}. Through the eigenvalues and eigenvectors, we have $L=U\Lambda U^{\top}$, where $\Lambda=diag([ \lambda_{1},\lambda_{2},\cdots,\lambda_{n} ])$.

\textbf{Graph Fourier Transform.}
According to theory of graph signal processing \cite{GSP}, we can treat the eigenvectors of normalized Laplacian matrix as the bases in graph Fourier transform. Given a signal $x \in \mathbb{R}^{n}$, the graph Fourier transform is defined as $\hat{x}=U^{\top}x$, and the inverse graph Fourier transform is $x=U\hat{x}$. Thus, the convolutional $*_{G}$ between the signal $x$ and convolution kernel $f$ is:
\begin{equation}
\label{convolution}
	f *_{G} x = U \left( \left( U^{\top}f \right) \odot \left( U^{\top}x \right) \right) = Ug_{\theta}U^{\top}x,
\end{equation}
where $\odot$ denotes the element-wise product of vectors and $g_{\theta}$ is a diagonal matrix, which represents the convolutional kernel in the spectral domain, replacing $U^{\top}f$. Spectral CNN \cite{SpectralCNN} uses a non-parametric convolutional kernel $g_{\theta}=diag(\{ \theta_{i} \}_{i=1}^{n})$. ChebNet \cite{ChebNet} parameterizes convolutional kernel with a polynomial expansion $g_{\theta}=\sum_{k=0}^{K-1}\alpha_{k}\Lambda^{k}$. GCN defines the convolutional kernel as $g_{\theta}=I-\Lambda$.

\subsection{Separation}
As discussed in Section \ref{case}, both the low-frequency and high-frequency signals are helpful for learning node representations. 
To make full use of them, we design a low-pass filter $\mathcal{F}_{L}$ and a high-pass filter $\mathcal{F}_{H}$ to separate the low-frequency and high-frequency signals from the node features:
\begin{gather}
	\mathcal{F}_{L} = \varepsilon I + D^{-1/2}AD^{-1/2} = (\varepsilon+1) I - L, \notag \\
	\mathcal{F}_{H} = \varepsilon I - D^{-1/2}AD^{-1/2} = (\varepsilon-1) I + L, \label{filters}
\end{gather}
where $\varepsilon$ is a scaling hyper-parameter limited in $[0, 1]$. If we use $\mathcal{F}_{L}$ and $\mathcal{F}_{H}$ to replace the convolutional kernel $f$ in Equation \ref{convolution}. The signal $x$ is filtered by $\mathcal{F}_{L}$ and $\mathcal{F}_{H}$ as:
\begin{gather}
	\mathcal{F}_{L} *_{G} x = U[(\varepsilon+1) I - \Lambda]U^{\top}x = \mathcal{F}_{L} \cdot x, \notag \\
	\mathcal{F}_{H} *_{G} x = U[(\varepsilon-1) I + \Lambda]U^{\top}x = \mathcal{F}_{H} \cdot x.
	\label{filterings}
\end{gather}
Therefore, the convolutional kernel of $\mathcal{F}_{L}$ is $g_{\theta}=(\varepsilon+1) I - \Lambda$, rewritten as $g_{\theta}(\lambda_{i})=\varepsilon+1-\lambda_{i}$, shown in Figure \ref{low1}. When $\lambda_{i} > 1+\varepsilon$, $g_{\theta}(\lambda_{i}) < 0$, which gives a negative amplitude. 
To avoid this, we consider the second-order convolution kernel $\mathcal{F}_{L}^{2}$ with $g_{\theta}(\lambda_{i})=(\varepsilon+1-\lambda_{i})^{2}$, shown in Figure \ref{low2}. When $\lambda_{i} = 0$, $g_{\theta}(\lambda_{i}) = (\varepsilon+1)^{2} > 1$ and when $\lambda_{i} = 2$, $g_{\theta}(\lambda_{i}) = (\varepsilon-1)^{2} < 1$, which amplifies the low-frequency signals and restrains the high-frequency signals.
\begin{remark}
	(Enhanced filters) As in Figure \ref{functions}, compared with traditional low-pass filters, e.g., GCN and SGC \cite{SGC}, $\mathcal{F}_{L}$ is an enhanced low-pass filter. Convolutional kernel of second-order GCN is $g_{\theta}(\lambda_{i})=(1-\lambda_{i})^{2}$. When $\lambda_{i}=0$, the amplitude of GCN is $g_{\theta}(\lambda_{i})=1<(1+\varepsilon)^{2}$. Hence, the value of $\mathcal{F}_{L}$ is greater than GCN in low-pass filtering. Similarly, $\mathcal{F}_{H}$ is an enhanced high-pass filter, which provides a greater value for the high-frequency signals.
\end{remark}

Separating the low-frequency and high-frequency signals from the node features provides a feasible way to deal with different networks, e.g., low-frequency signals for assortative networks and high-frequency signals for disassortative networks.
However, this way has two disadvantages: One is that selecting signals requires a priori knowledge, i.e., we actually do not know whether a network is assortative or disassortative beforehand. The other is that, as in Equation \ref{filterings}, it requires matrix multiplication, which is undesirable for large graphs \cite{GraphSAGE}.
Therefore, an efficient method that can adaptively aggregate low-frequency and high-frequency signals is desired.


\begin{remark}
\label{rema}
	(Concrete meaning of signals)
	In Equation \ref{filters}, we have $\mathcal{F}_{L} = \varepsilon I + D^{-1/2}AD^{-1/2}$ and $\mathcal{F}_{H} = \varepsilon I - D^{-1/2}AD^{-1/2}$. 
	Therefore, the concrete meaning of low-frequency signal $\mathcal{F}_{L} \cdot x$ is the sum of node features and neighborhood features in spatial domain, while high-frequency signal $\mathcal{F}_{H} \cdot x$ represents the difference between node features and neighborhood features in spatial domain.
\end{remark}

\begin{figure}
\centering
\includegraphics[width=\linewidth]{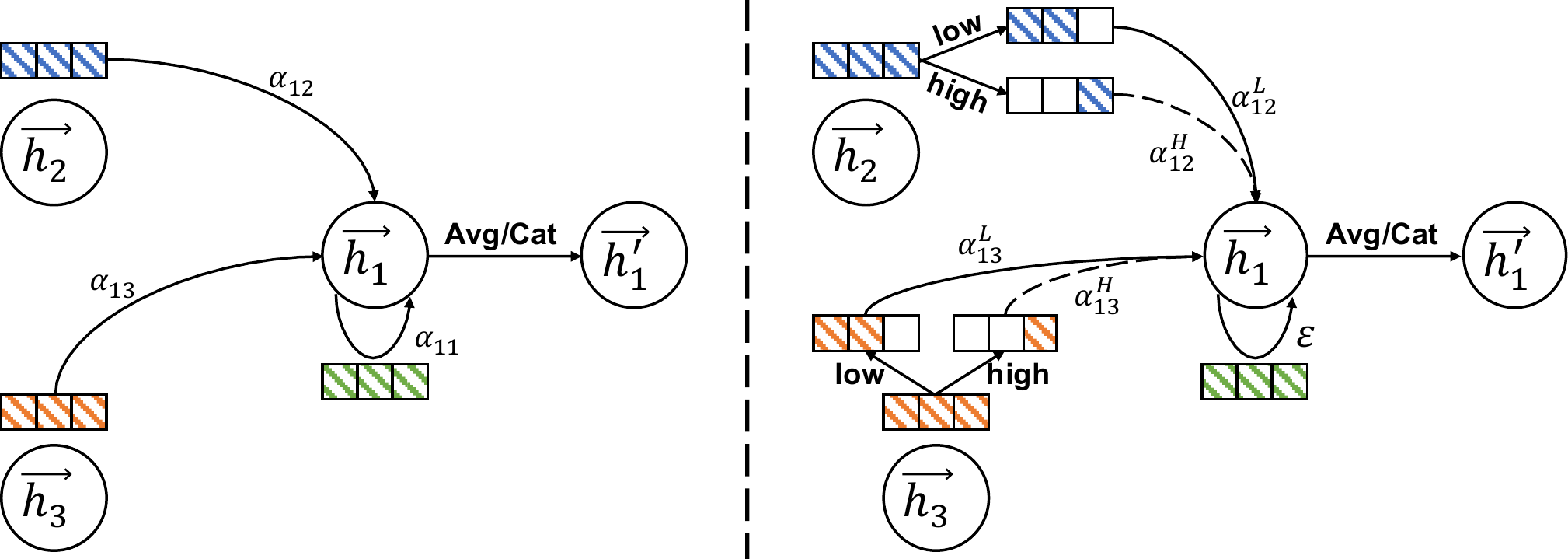}
\caption{\textbf{Left:} The aggregation process of existing GNNs, and $\alpha_{ij}$ indicates the importance of node $j$ to node $i$. \textbf{Right:} The aggregation process of FAGCN, and $\alpha_{ij}^{L}$, $\alpha_{ij}^{H}$ denote the proportions of low-frequency and high-frequency signals of node $j$ to node $i$, respectively.}
\label{model}
\end{figure}

\subsection{Aggregation}
Before introducing the details, we first compare the aggregation process of existing GNNs and FAGCN in Figure \ref{model}. The left shows that existing GNNs consider learning the importance ($\alpha_{ij}$) of each node in aggregation. The right is FAGCN that uses two coefficients ($\alpha^{L}_{ij}$ and $\alpha^{H}_{ij}$) to aggregate low-frequency and high-frequency signals from the neighbors, respectively.

The input of our model are the node features, $\mathbf{H}=\{ \mathbf{h}_{1},\mathbf{h}_{2},\cdots,\mathbf{h}_{N} \} \in \mathbb{R}^{N \times F}$, where $F$ is the dimension of the node features.
For the purpose of frequency adaptation, a basic idea is to use the attention mechanism to learn the proportion of low-frequency and high-frequency signals:
\begin{equation}
	\tilde{\mathbf{h}}_{i} = \alpha_{ij}^{L} (\mathcal{F}_{L} \cdot \mathbf{H})_{i} + \alpha_{ij}^{H} (\mathcal{F}_{H} \cdot \mathbf{H})_{i} = \varepsilon \mathbf{h}_{i} + \sum_{j \in \mathcal{N}_{i}} \frac{\alpha_{ij}^{L} - \alpha_{ij}^{H}}{\sqrt{d_{i}d_{j}}} \mathbf{h}_{j},
\end{equation}
where $\tilde{\mathbf{h}}_{i}$ is the aggregated representation of node $i$. $\mathcal{N}_{i}$ and $d_{i}$ denote the neighbor set and degree of node $i$, respectively. $\alpha_{ij}^{L}$ and $\alpha_{ij}^{H}$ represent the proportions of node $j$'s low-frequency and high-frequency signals to node $i$. 
We set $\alpha_{ij}^{L} + \alpha_{ij}^{H}=1$ and $\alpha_{ij}^{G} = \alpha_{ij}^{L} - \alpha_{ij}^{H}$.
In the following, we show that $\alpha_{ij}^{G}$ can be interpreted from two perspectives.

\begin{remark}
	(Two perspectives of $\alpha_{ij}^{G}$) One is that $\alpha_{ij}^{G}$ indirectly represents the proportion of low-frequency and high-frequency signals.  $\alpha_{ij}^{G} > 0$, i.e., $\alpha_{ij}^{L} > \alpha_{ij}^{H}$, means that low-frequency signals dominate the representations and vice versa. 
	Based on $\alpha_{ij}^{G}$, we can calculate the value of $\alpha_{ij}^{L}$ and $\alpha_{ij}^{H}$, so as to achieve the proportions of signals.
	Another is that $\alpha_{ij}^{G}$ denotes the coefficients of neighbors in aggregation. $\alpha_{ij}^{G} > 0$ represents the sum of node features and neighborhood features, i.e., $\mathbf{h}_{i} + \mathbf{h}_{j}$, while $\alpha_{ij}^{G} < 0$ represents the difference between them, i.e., $\mathbf{h}_{i} - \mathbf{h}_{j}$, as explained in Remark \ref{rema}. Besides, when $\alpha_{ij}^{G} \approx 0$, the contributions of neighbors will be limited, so the raw features will dominate the node representations. 
\end{remark}

In order to learn the coefficients $\alpha_{ij}^{G}$ effectively, we need to consider the features of both the node itself and its neighbors.
Therefore, we propose a shared \emph{self-gating} mechanism $\mathbb{R}^{F} \times \mathbb{R}^{F} \to \mathbb{R}$ to learn the coefficients:
\begin{equation}
\label{tanh}
	\alpha_{ij}^{G}=\tanh \left( \mathbf{g}^{\top} \left[ \mathbf{h}_{i} \parallel \mathbf{h}_{j} \right] \right),
\end{equation}
where $\parallel$ denotes the concatenation operation, $\mathbf{g} \in \mathbb{R}^{2F}$ can be seen as a shared convolutional kernel \cite{GAT} and $\tanh(\cdot)$ is the hyperbolic tangent function, which can naturally limits the value of $\alpha_{ij}^{G}$ in $[-1, 1]$. Besides, to make use of the structural information, we only calculate the coefficients among the node and its first-order neighbors $\mathcal{N}_{i}$.

After calculating $\alpha_{ij}^{G}$, we can aggregate the representations of neighbors:
\begin{equation}
\label{self-gating}
	\mathbf{h}_{i}^{'}= \varepsilon \mathbf{h}_{i} + \sum_{j \in \mathcal{N}_{i}} \frac{\alpha_{ij}^{G}}{\sqrt{d_{i}d_{j}}} \mathbf{h}_{j},
\end{equation}
where $\mathbf{h}_{i}^{'}$ denotes the aggregated representation of node $i$. Note that when aggregating information from neighbors, the degrees are used to normalize the coefficients, thus preventing the aggregated representations from being too large. 
\subsection{The Whole Architecture of FAGCN}
In the previous section, we introduce the message passing process of FAGCN. Here, we formally define the whole architecture of FAGCN. Some recent studies \cite{SGC, cui2020adaptive} emphasize that the entanglement of filters and weight matrices may be harmful to the performance and robustness of the model. Motivated by this, we first use a multilayer perceptron (MLP) to apply the non-linear transform to the raw features. Then we propagate the representations through Eq. \ref{self-gating}. The mathematical expression of FAGCN is defined as:
\begin{align}
    \mathbf{h}^{(0)}_{i} & = \phi(\mathbf{W}_{1}\mathbf{h}_{i}) & \in \mathbb{R}^{F' \times 1} \notag \\
    \mathbf{h}^{(l)}_{i} & = \varepsilon \mathbf{h}^{(0)}_{i} + \sum_{j \in \mathcal{N}_{i}} \frac{\alpha_{ij}^{G}}{\sqrt{d_{i}d_{j}}} \mathbf{h}^{(l-1)}_{j} & \in \mathbb{R}^{F' \times 1} \notag \\
    \mathbf{h}_{out} & = \mathbf{W}_{2}\mathbf{h}^{(L)}_{i} & \in \mathbb{R}^{K \times 1},
\end{align}
where $\mathbf{W}_{1} \in \mathbb{R}^{F \times F'}, \mathbf{W}_{2} \in \mathbb{R}^{F' \times K}$ are the weight matrices, $\phi$ is the activation function, $F^{'}$ denotes the dimension of hidden layers, $l$ indicates the layers, ranging from 1 to $L$, and $K$ represents the number of classes.
The complexity of a single layer FAGCN is $\mathcal{O}((N + |E|) \times F')$, which is approximately linear with the number of edges and nodes.

\section{Theoretical Analysis}
\label{theory}

\subsection{Connection to Existing GNNs}

FAGCN is a generalization of most existing GNNs. Specifically, when we set the coefficients $\alpha_{ij}^{G}$ to 1, FAGCN acts like GCN and when we use softmax function to normalize $\alpha_{ij}^{G}$, FAGCN becomes GAT.
Therefore, as indicated in Remark \ref{rema}, because the coefficients in GCN and GAT are both greater than zero, they prefer to aggregate the low-frequency signals. However, FAGCN can learn a coefficient that can be positive or negative, to adaptively aggregate low-frequency and high-frequency signals.

\subsection{Expressive Power of FAGCN}
\label{sec:expressive}

We analyze the expressive power of FAGCN from the perspective of the distance between node representations.
Assume that $(u, v)$ is a pair of connected nodes, and $\mathbf{h}_{u}, \mathbf{h}_{v}$ are the node features. Let $\mathcal{D}, \mathcal{D}_{L}, \mathcal{D}_{H}$ be the distance of node features, low-frequency signals of node features and high-frequency signals of node features, respectively.
\begin{align}
	\mathcal{D} = \| \mathbf{h}_{u}-\mathbf{h}_{v} \|_{2}. \notag \\
	\mathcal{D}_{L} = \| (\varepsilon\mathbf{h}_{u}+\mathbf{h}_{v}) - (\varepsilon\mathbf{h}_{v}+\mathbf{h}_{u}) \|_{2} &= | 1-\varepsilon |\mathcal{D}. \notag \\
	\mathcal{D}_{H} = \| (\varepsilon\mathbf{h}_{u}-\mathbf{h}_{v}) - (\varepsilon\mathbf{h}_{v}-\mathbf{h}_{u}) \|_{2} &= | 1+\varepsilon |\mathcal{D}. \notag	
\end{align}


\begin{proposition}
\label{prop1}
	Low-pass filtering makes the representations become similar, while high-pass filtering makes the representations become discriminative.
\end{proposition}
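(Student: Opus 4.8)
The plan is to reduce the statement to the three distance identities already displayed above and then exploit the constraint $\varepsilon\in[0,1]$. First I would make precise what ``the representation after filtering'' means for a connected pair $(u,v)$. By Remark~\ref{rema}, applying $\mathcal{F}_{L}$ to the feature matrix replaces $\mathbf{h}_{i}$ by $\varepsilon\mathbf{h}_{i}+\sum_{j\in\mathcal{N}_{i}}\mathbf{h}_{j}/\sqrt{d_{i}d_{j}}$, while applying $\mathcal{F}_{H}$ replaces it by $\varepsilon\mathbf{h}_{i}-\sum_{j\in\mathcal{N}_{i}}\mathbf{h}_{j}/\sqrt{d_{i}d_{j}}$. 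Specializing to the edge $(u,v)$ in isolation (equivalently, absorbing the normalization constants), the low-frequency representations of $u$ and $v$ become $\varepsilon\mathbf{h}_{u}+\mathbf{h}_{v}$ and $\varepsilon\mathbf{h}_{v}+\mathbf{h}_{u}$, and the high-frequency representations become $\varepsilon\mathbf{h}_{u}-\mathbf{h}_{v}$ and $\varepsilon\mathbf{h}_{v}-\mathbf{h}_{u}$. This is exactly the configuration for which $\mathcal{D}_{L}$ and $\mathcal{D}_{H}$ were defined.

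Next I would evaluate the norms directly. The difference of the two low-frequency representations is $(\varepsilon-1)(\mathbf{h}_{u}-\mathbf{h}_{v})$, giving $\mathcal{D}_{L}=|1-\varepsilon|\,\mathcal{D}$; the difference of the two high-frequency representations is $(\varepsilon+1)(\mathbf{h}_{u}-\mathbf{h}_{v})$, giving $\mathcal{D}_{H}=|1+\varepsilon|\,\mathcal{D}$. These are the identities already recorded above, so this step is pure bookkeeping. The conclusion then follows from an elementary bound: since $\varepsilon\in[0,1]$ we have $0\le 1-\varepsilon\le 1\le 1+\varepsilon$, hence $\mathcal{D}_{L}=|1-\varepsilon|\,\mathcal{D}\le\mathcal{D}\le|1+\varepsilon|\,\mathcal{D}=\mathcal{D}_{H}$, with both inequalities strict whenever $\varepsilon>0$. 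Reading distance as (dis)similarity, the low-pass filter never increases and generically strictly shrinks the gap between the two node representations (makes them more similar), whereas the high-pass filter never decreases and generically strictly enlarges it (makes them more discriminative). If one wants the statement for the full $L$-layer model rather than a single edge, I would iterate the same inequality along the propagation: the contraction factor $|1-\varepsilon|$ compounds on the low-pass branch and the expansion factor $|1+\varepsilon|$ compounds on the high-pass branch.

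I do not expect a genuine obstacle. The only points needing a word of care are the passage from the matrix operators $\mathcal{F}_{L},\mathcal{F}_{H}$ to the two-node picture --- i.e. arguing that it suffices to compare a single connected pair and that the degree-normalization constants do not affect the qualitative comparison --- and the boundary case $\varepsilon=0$, where both filters act (up to sign) as the bare adjacency operator and the two distances coincide with $\mathcal{D}$, so the strictness of the inequalities degenerates.
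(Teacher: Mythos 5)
Your argument is correct and follows essentially the same route as the paper: compare the displayed distances $\mathcal{D}_{L}=|1-\varepsilon|\,\mathcal{D}$ and $\mathcal{D}_{H}=|1+\varepsilon|\,\mathcal{D}$ against $\mathcal{D}$ using $\varepsilon\in[0,1]$. Your extra care about the two-node reduction and the degenerate case $\varepsilon=0$ (where the paper's strict inequalities become equalities) is a minor but welcome refinement, not a different proof.
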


\begin{proof}
	It is easy to see that $\mathcal{D}_{H} > \mathcal{D} > \mathcal{D}_{L}$. 
	This indicates that compared with the original distance $\mathcal{D}$, the distance $\mathcal{D}_{L}$ induced by low-frequency signals is smaller, implying that low-frequency signals can make the representations of connected nodes become similar. 
	While the distance $\mathcal{D}_{H}$ induced by high-frequency signals is larger, implying that high-frequency signals can make the representations of connected nodes become discriminative.
\end{proof}

We have analyzed the roles of low-frequency and high-frequency signals in representation learning. Obviously, FAGCN can choose to shorten or enlarge the distance between node representations flexibly, while most existing GNNs cannot.

\begin{proposition}
	Most existing GNNs, e.g., GCN, only have the capability to make representations of nodes become similar.
\end{proposition}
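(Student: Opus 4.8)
The plan is to run the distance computation behind Proposition~\ref{prop1} in the opposite direction. I want to show that a GCN propagation step (and, by the same reasoning, a GAT step) can only \emph{contract} the distance between a pair of connected nodes, never enlarge it, so these models are intrinsically low-pass and cannot make the representations of neighbors discriminative; the contrast with FAGCN, whose $\tanh$-gated coefficient $\alpha_{ij}^{G}\in[-1,1]$ may be negative, is then immediate.

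First I would recall, from the ``Connection to Existing GNNs'' discussion, that a GCN layer is the special case of Eq.~\ref{self-gating} with $\alpha_{ij}^{G}\equiv 1$ — equivalently, propagation by the renormalized operator $\hat D^{-1/2}\hat A\hat D^{-1/2}$, whose entries are all non-negative — and that a GAT layer is the case in which the $\alpha_{ij}^{G}$ are softmax outputs, hence strictly positive. So in both models $\alpha_{ij}^{G}\ge 0$ on every edge, and by the remark on the two perspectives of $\alpha_{ij}^{G}$ the per-edge update is always of the summation ($\mathbf h_i+\mathbf h_j$) type, i.e. low-frequency aggregation; the only structural property I will actually use is this non-negativity, so the argument covers any message-passing layer with non-negative aggregation weights (SGC, GraphSAGE-mean, GIN, etc.).

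Next I would repeat the elementary computation in the proof of Proposition~\ref{prop1} with a generic non-negative coefficient. Restricting attention to a connected pair $(u,v)$ and writing the propagated features as $\mathbf h'_u=\varepsilon\mathbf h_u+c\,\mathbf h_v$ and $\mathbf h'_v=\varepsilon\mathbf h_v+c\,\mathbf h_u$ with $c=\alpha_{uv}^{G}/\sqrt{d_ud_v}\ge 0$ (for GAT, $\varepsilon$ is replaced by the self-weight and $c$ by the cross-weight, both softmax outputs), one gets $\mathbf h'_u-\mathbf h'_v=(\varepsilon-c)(\mathbf h_u-\mathbf h_v)$, hence $\mathcal D'=|\varepsilon-c|\,\mathcal D$. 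Since both $\varepsilon$ and $c$ lie in $[0,1]$ for these models, $|\varepsilon-c|\le 1$, so $\mathcal D'\le\mathcal D$: the representations always move closer. Reaching the high-pass regime $\mathcal D'>\mathcal D$ of Proposition~\ref{prop1} would require $c<\varepsilon-1$ or $c>\varepsilon+1$, i.e. a negative coefficient or one exceeding $1$ — impossible for an entry of the renormalized adjacency or a softmax weight. Hence GCN and GAT ``only have the capability to make representations of nodes become similar,'' whereas FAGCN can take $\alpha_{ij}^{G}<0$ and recover $\mathcal D_H=|1+\varepsilon|\,\mathcal D>\mathcal D$.

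The step that needs the most care is upgrading this two-node picture to the full graph and the full layer. With several neighbors the update of $\mathbf h_u$ mixes many $\mathbf h_j$, so the clean statement one really wants is global — for instance, that the Dirichlet energy $\sum_{(i,j)\in E}\|\mathbf h_i-\mathbf h_j\|_2^{2}$ is non-increasing under the GCN propagation operator, with strict decrease unless the signal is already constant on each connected component — which holds because that operator is a low-pass filter ($g_\theta(\lambda)=1-\lambda$, the renormalization trick keeping the spectrum inside $(-1,1]$ and damping the high-frequency amplitudes). A milder nuisance is the per-layer weight matrix and nonlinearity; since the claim concerns the qualitative effect of the \emph{propagation} (the constraint $\alpha_{ij}^{G}\ge 0$), I would state it at the level of the aggregation operator and note that the $1$-Lipschitz ReLU cannot turn a contraction into an expansion. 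Everything else is a direct re-use of Proposition~\ref{prop1}.
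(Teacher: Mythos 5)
Your proof is correct and rests on the same underlying idea as the paper's --- a pairwise distance computation for a connected pair $(u,v)$ showing that the propagation step can only contract $\mathcal{D}$ --- but you execute it at a different level of generality. The paper plugs in the specific renormalized GCN filter $(D+I)^{-1/2}(A+I)(D+I)^{-1/2}$ and, under the approximation $d_u \approx d_v \approx d$, obtains $\mathcal{D}_G \approx |1-\tfrac{1}{d}|\mathcal{D} < \mathcal{D}$; your argument instead abstracts GCN (and GAT, SGC, mean-aggregators) into the FAGCN template of Eq.~\ref{self-gating} with a non-negative edge coefficient $c\in[0,1]$ and self-weight $\varepsilon\in[0,1]$, giving $\mathcal{D}' = |\varepsilon - c|\,\mathcal{D} \le \mathcal{D}$ without any degree approximation, and identifies exactly why expansion is unreachable (it would need $c<\varepsilon-1$ or $c>\varepsilon+1$, hence a negative or out-of-range coefficient). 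This buys uniformity across all non-negative-weight message passing schemes and makes the contrast with FAGCN's signed $\alpha_{ij}^{G}$ explicit, whereas the paper's version is more concrete but tied to GCN and to the $d_u\approx d_v$ heuristic. Your added global Dirichlet-energy remark is a genuine strengthening the paper does not attempt (its two-node calculation, like yours, quietly ignores the other neighbors); just be aware that the strict statement about non-increasing energy holds for the propagation operator alone --- the learnable weight matrices $\mathbf{W}$ can expand distances, and "one exceeding $1$" should strictly read "exceeding $\varepsilon+1$" --- but since the proposition, like the paper's proof, concerns the aggregation step, these caveats do not affect the claim.
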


\begin{proof}
	The filter used in GCN is: $(D+I)^{-1/2}(A+I)(D+I)^{-1/2}$. 
	Hence, the distance of representations learned by GCN is: $\mathcal{D}_{G} \approx \| (\frac{1}{d_{u}} \mathbf{h}_{u} + \mathbf{h}_{v}) - (\frac{1}{d_{v}} \mathbf{h}_{v} + \mathbf{h}_{u}) \|_{2} \approx |1-\frac{1}{d}|\mathcal{D} < \mathcal{D}$ (s.t. $d_{u} \approx d_{v} \approx d$).
\end{proof}

\section{Experiments}

\subsection{Datasets}
\label{datasets}

\begin{table}
  \centering
  \caption{The statistics of datasets}
	\resizebox{\linewidth}{!}{
    \begin{tabular}{lcrrrrr}
    \toprule
    \textbf{Dataset} & \textbf{Assortivity} & \textbf{Nodes} & \textbf{Edges} & \textbf{Classes} & \textbf{Features} \\
    \midrule
    Cora  & 0.771 & 2,708 & 5,429 & 7 & 1,433 \\
    Citeseer & 0.671 & 3,327 & 4,732 & 6 & 3,703 \\
    Pubmed & 0.686 & 19,717 & 44,338 & 3 & 500	\\
	\midrule
    Chameleon	& 0.180 & 2,277 & 36,101 & 3 & 2,325 \\
    Squirrel	& 0.018 & 5,201 & 217,073 & 3 & 2,089 \\
    Actor		& 0.003 & 7,600 & 33,544 & 5 & 931	\\
    \bottomrule
    \end{tabular}}
\label{statistic}
\end{table}
    
\begin{figure*}
\centering
\subfigure[Chameleon]{
\label{Chameleon}
\includegraphics[width=0.31\textwidth]{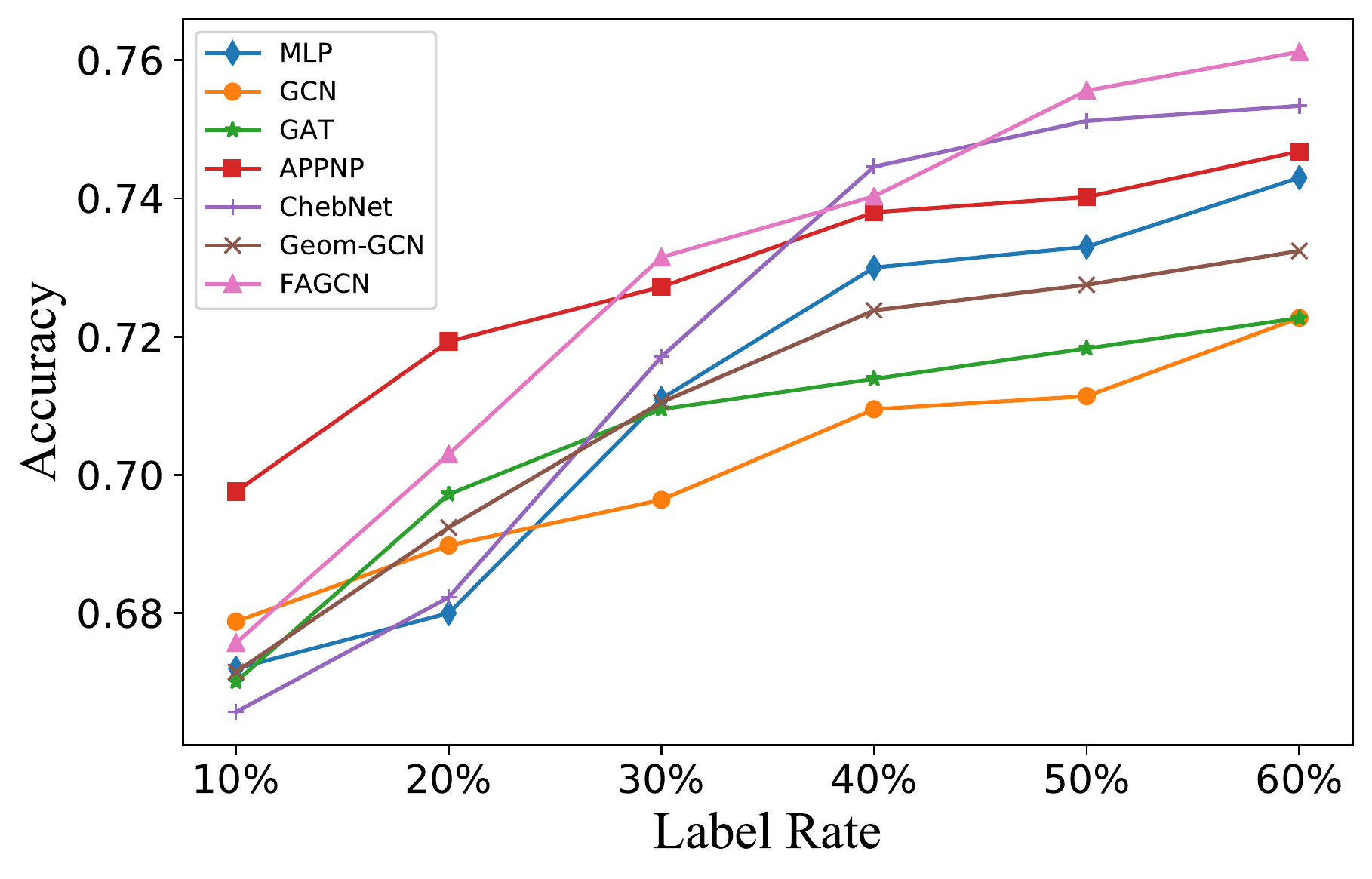}
}
\subfigure[Squirrel]{
\label{Squirrel}
\includegraphics[width=0.31\textwidth]{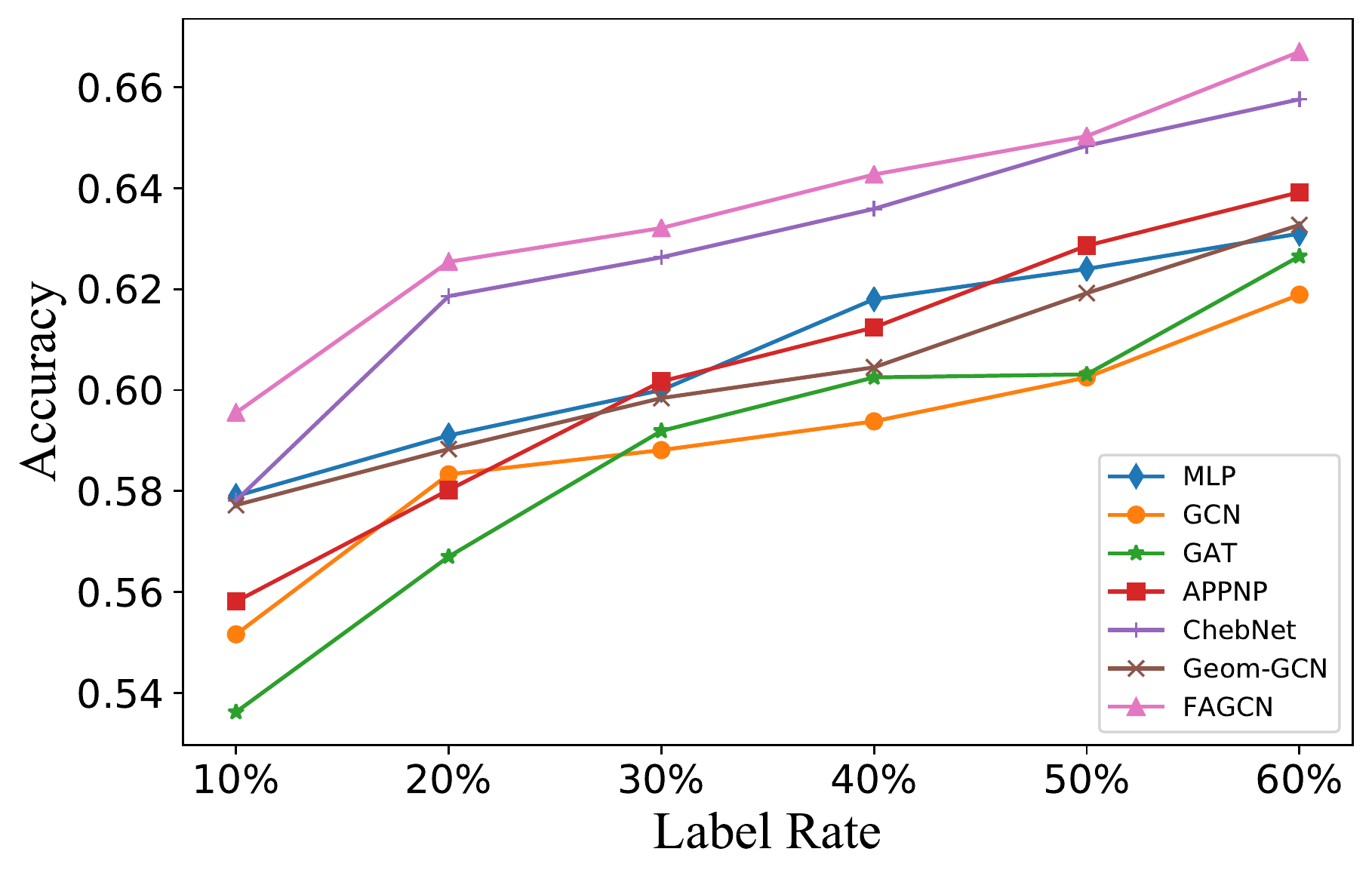}
}
\subfigure[Actor]{
\label{Actor}
\includegraphics[width=0.31\textwidth]{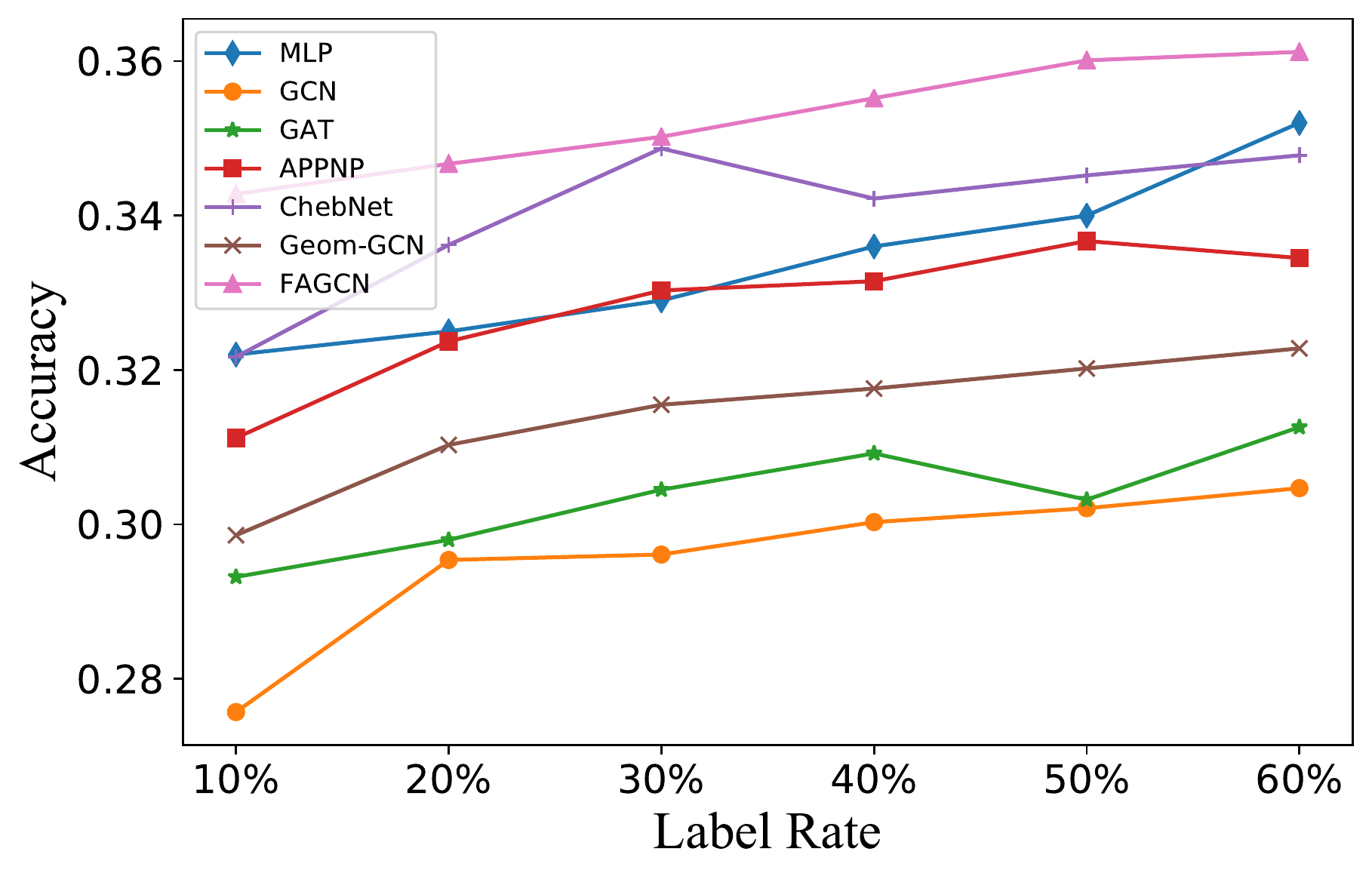}
}
\caption{Classification accuracy of different methods under different label rates on disassortative networks.}
\label{disassortative}
\end{figure*}    

\textbf{Assortative datasets.} We choose the commonly used \emph{citation networks}, e.g., Cora, Citeseer and Pubmed for assortative datasets. Edges in these networks represent the citation relationship between two papers (undirected), node features are the bag-of-words vector of the papers and labels are the fields of papers. In each network, we use 20 labeled nodes per class for training, 500 nodes for validation and 1000 nodes for testing. Details can be found in \cite{GCN}.

\noindent
\textbf{Disassortative datasets.} We consider the \emph{Wikipedia networks}\footnote{http://snap.stanford.edu/data/wikipedia-article-networks.html} and \emph{Actor co-occurrence network} \cite{actornet} for disassortative datasets. 
Chameleon and Squirrel are two Wikipedia networks. Edges represent the hyperlinks between two pages, node features are some informative nouns in the pages and labels correspond to the traffic of the pages.
In Actor co-occurrence network, each node represents an actor, and the edges denote the collaborations of them. Node features are the keywords in Wikipedia and labels are the types of actors. 
Since there is no standard division for these networks. To verify the effectiveness and robustness, we use 20\% for validation, 20\% for testing and change the training ratio from 10\% to 60\%. 

More detailed characteristics of the datasets can be found in Table \ref{statistic}. Note that a higher value of the second column represents a more obvious assortativity \cite{mixing}.

\subsection{Experimental Setup}
\label{setup}

We compare FAGCN with two types of representative GNNs: Spectral-based methods, i.e., SGC \cite{SGC},  GCN \cite{GCN}, ChebNet \cite{ChebNet} and GWNN \cite{GWNN}; Spatial-based methods, i.e., GIN \cite{GIN}, GAT \cite{GAT}, MoNet \cite{MoNet}, GraphSAGE \cite{GraphSAGE} and APPNP \cite{PPNP}. For disassortative networks, we add Geom-GCN \cite{GeomGCN} and MLP as new benchmarks.
All methods were implemented in Pytorch with Adam optimizer \cite{Adam}.
We run 10 times and report the mean values with standard deviation. The hidden unit is fixed at 16 in assortative networks and 32 in disassortative networks.
The hyper-parameter search space is: learning rate in \{0.01, 0.005\}, dropout in \{0.4, 0.5, 0.6\}, weight decay in \{1$E$-3, 5$E$-4, 5$E$-5\}, number of layers in \{1, 2, $\cdots$, 8\}, $\varepsilon$ in \{0.1, $\cdots$, 1.0\}.

In assortative datasets, we use hyper-parameters in previous literature for baselines. For FAGCN, the hyper-parameter setting is: learning rate = 0.01, dropout = 0.6, weight decay = 1$E$-3, layers = 4. $\varepsilon$ = 0.2, 0.3, 0.3 for Cora, Citeseer and Pubmed. The patience of early stop is set to 100.
In disassortative datasets, the hyper-parameter for FAGCN is: learning rate = 0.01, dropout = 0.5, weight decay = 5$E$-5, layers = 2. $\varepsilon$ = 0.4, 0.3, 0.5 for Chameleon, Squirrel and Actor, respectively.
Besides, we run 500 epochs and choose the model with highest validation accuracy for testing.

\subsection{Classification on Different Types of Networks}
\label{sec:results}

\newsavebox{\tablebox}
\begin{table}
  \centering
  \caption{Summary of node classification results (in percent).}
	\begin{lrbox}{\tablebox}
    \begin{tabular}{lcccccc}
    \toprule
    \textbf{Method} & \textbf{Cora} & \textbf{Citeseer} & \textbf{Pubmed} \\
    \midrule
	SGC & 81.0\% & 71.9\% & 78.9\% \\
    GCN & 81.5\% & 70.3\% & 79.0\% \\
    GWNN & 82.8\% & 71.7\% & 79.1\%\\
    ChebNet & 81.2\% & 69.8\% & 74.4\%\\
    GraphHeat & 83.7\% & 72.5\% & \textbf{80.5\%}\\
	\midrule
	GIN & 77.6\% & 66.1\% & 77.0\%\\ 
	GAT & 83.0\% & 72.5\% & 79.0\%\\
	MoNet & 81.7\% & -     & 78.8\%\\
	APPNP & 83.7\% & 72.1\% & 79.2\%\\
    GraphSAGE & 82.3\% & 71.2\% & 78.5\%\\
    \midrule
	\textbf{FAGCN} & \textbf{84.1$\pm$0.5\%} & \textbf{72.7$\pm$0.8\%} & 79.4$\pm$0.3\% \\
    \bottomrule
    \end{tabular}
    \end{lrbox}
    \scalebox{1}{\usebox{\tablebox}}
  \label{assortative}
\end{table}

\begin{figure}[t]
    \centering
    \subfigure[Cora]{
        \includegraphics[width=0.45\linewidth]{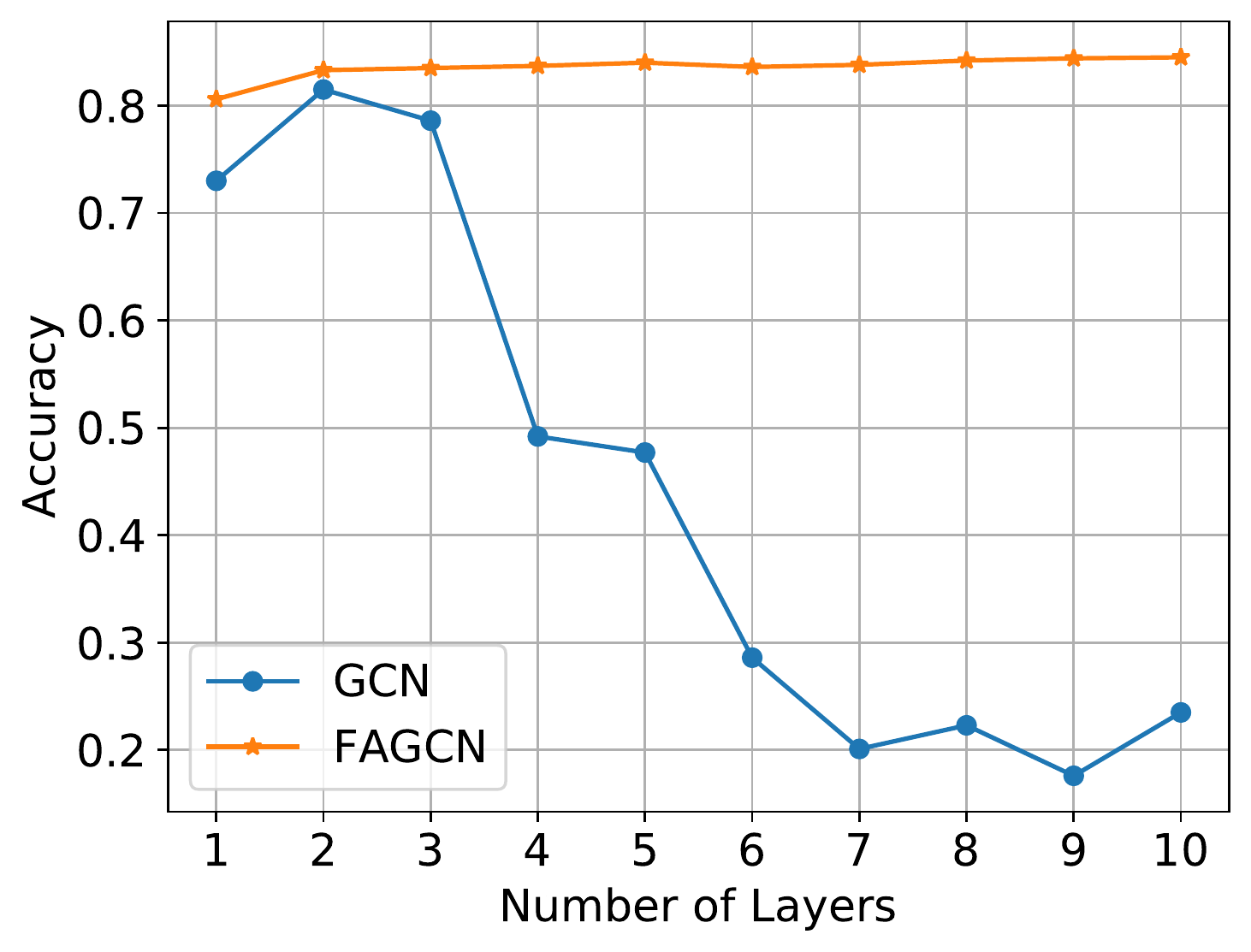}
        \label{cora_o}
    }
	\subfigure[Citeseer]{
        \includegraphics[width=0.45\linewidth]{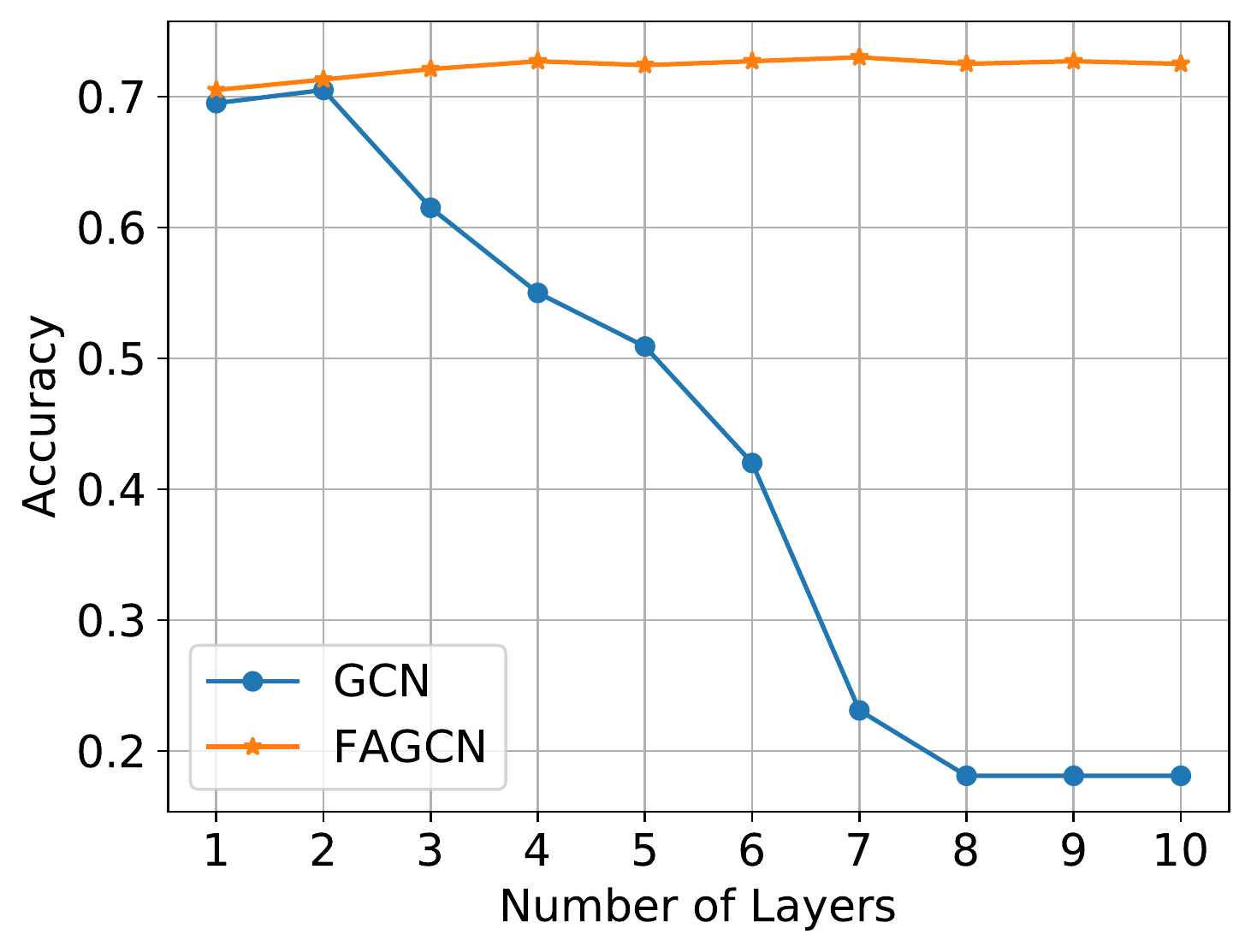}
        \label{citeseer_o}
    }
    \quad    
    \subfigure[Pubmed]{
        \includegraphics[width=0.45\linewidth]{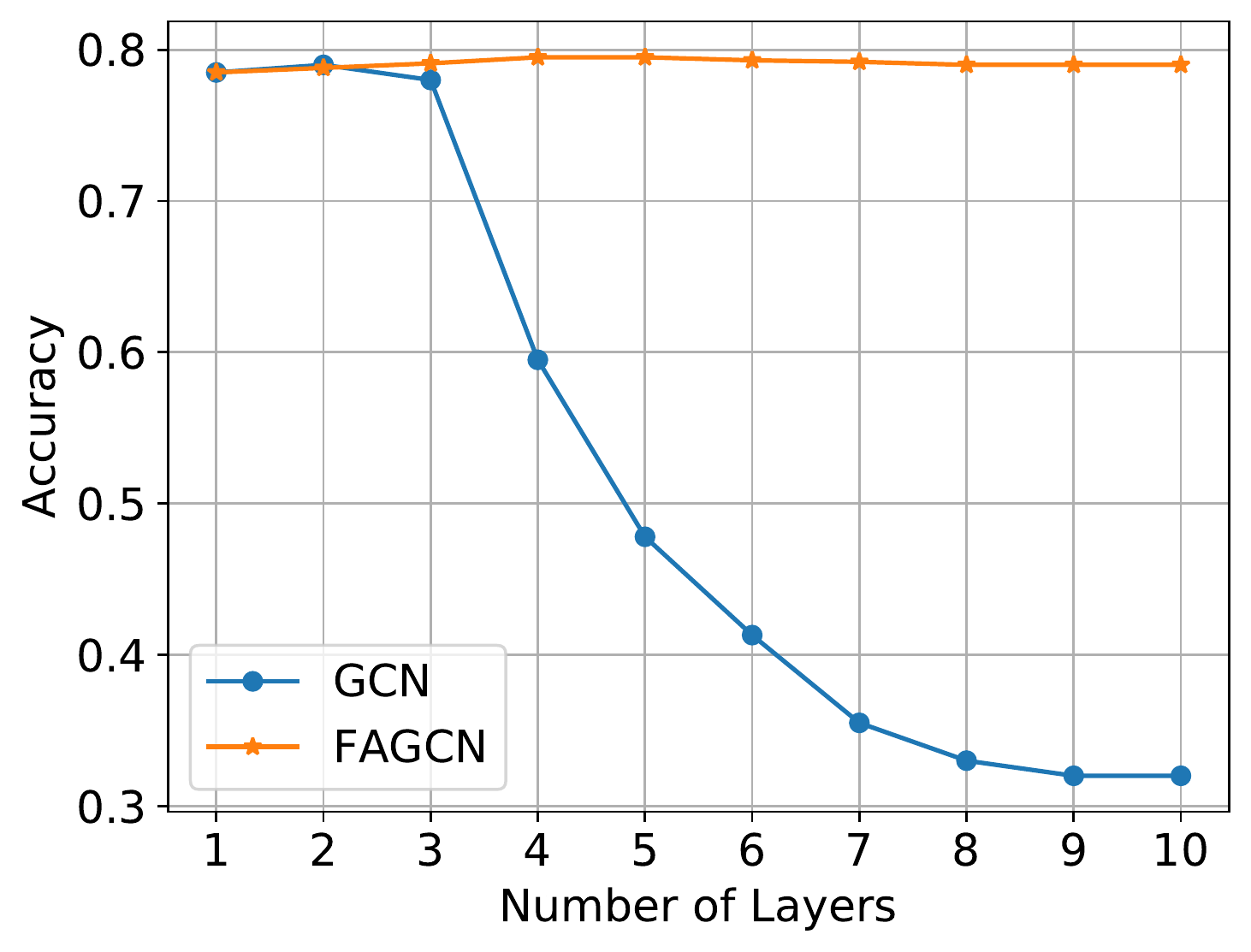}
    }
	\subfigure[Chameleon]{
        \includegraphics[width=0.45\linewidth]{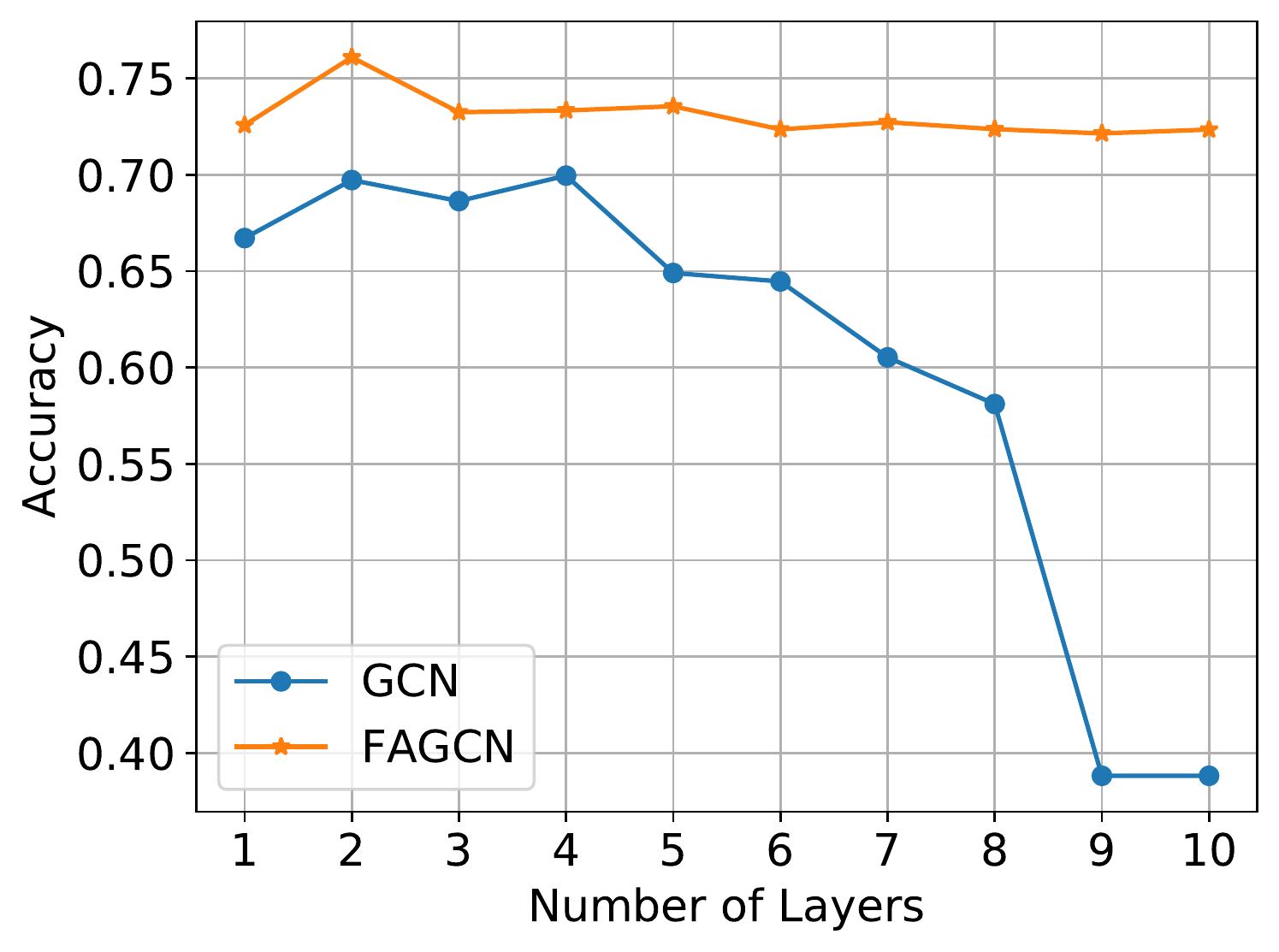}
    }
    \quad    
	\subfigure[Squirrel]{
        \includegraphics[width=0.45\linewidth]{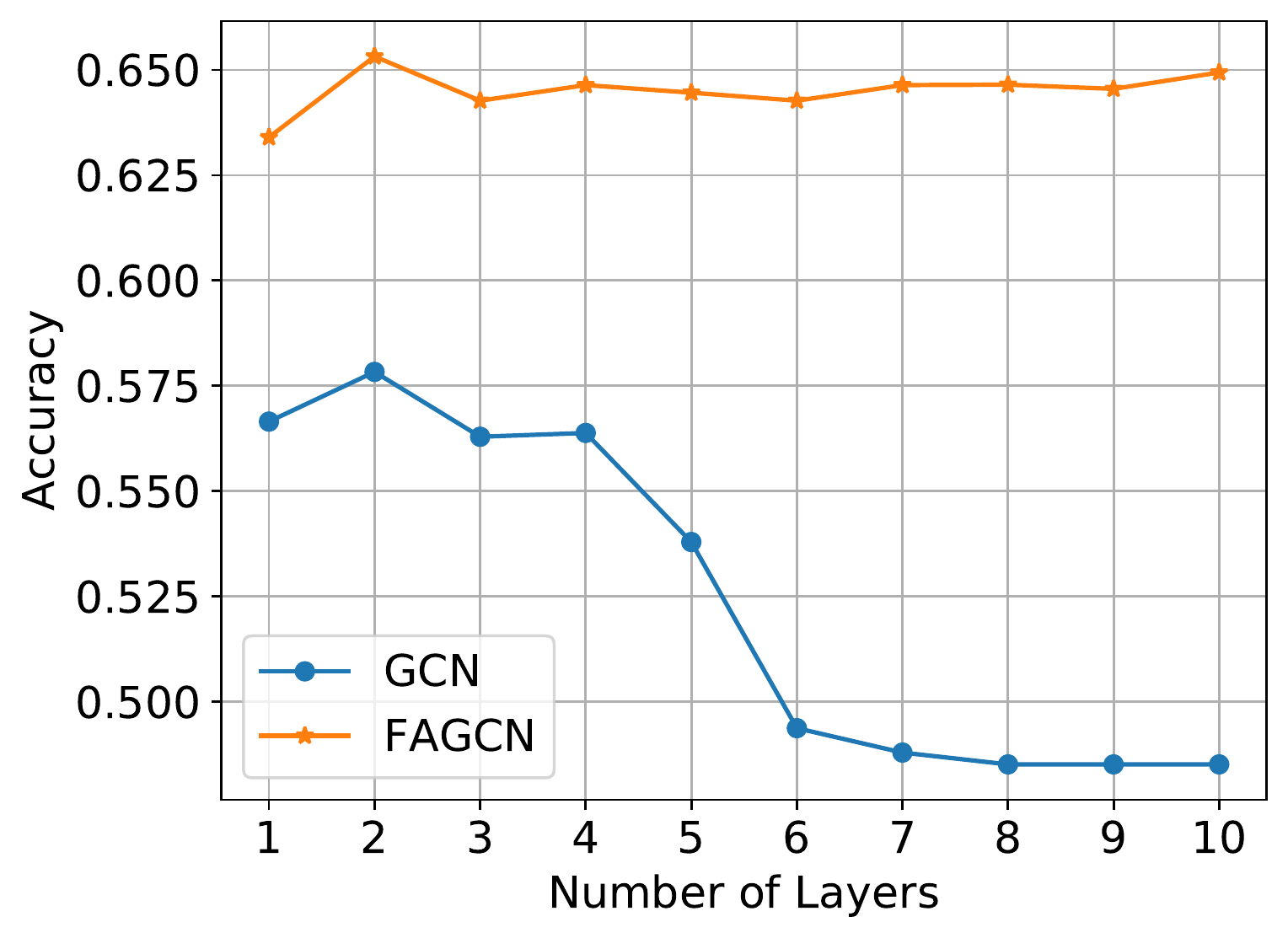}
    }
	\subfigure[Actor]{
        \includegraphics[width=0.45\linewidth]{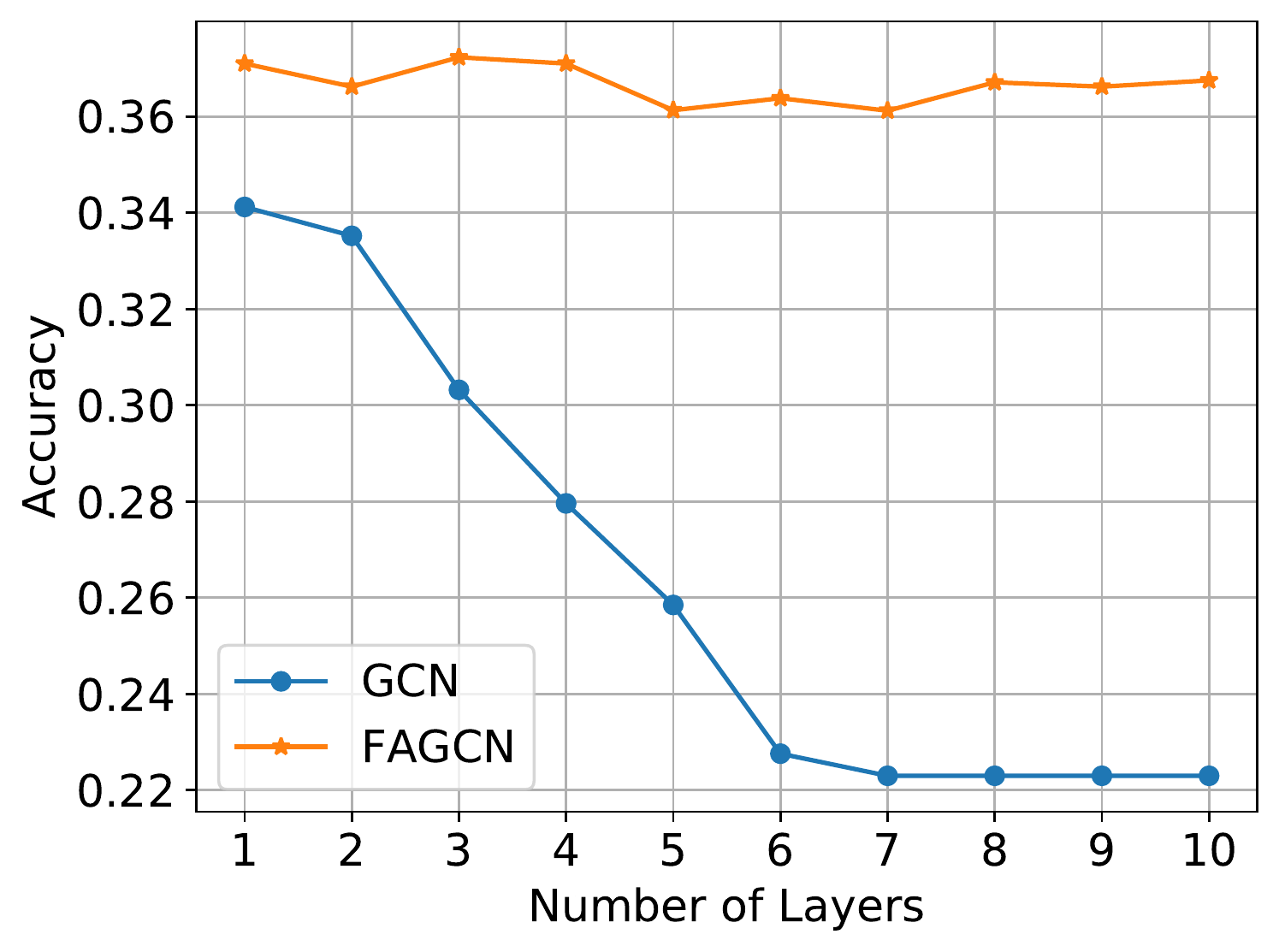}
    }
    \caption{Classification accuracy with different model depth.}
    \label{over-smoothing}
\end{figure}

\begin{figure*}[t]
\centering
\subfigure[Cora, Citeseer and Pubmed]{
\label{cora_h}
\includegraphics[width=0.23\textwidth]{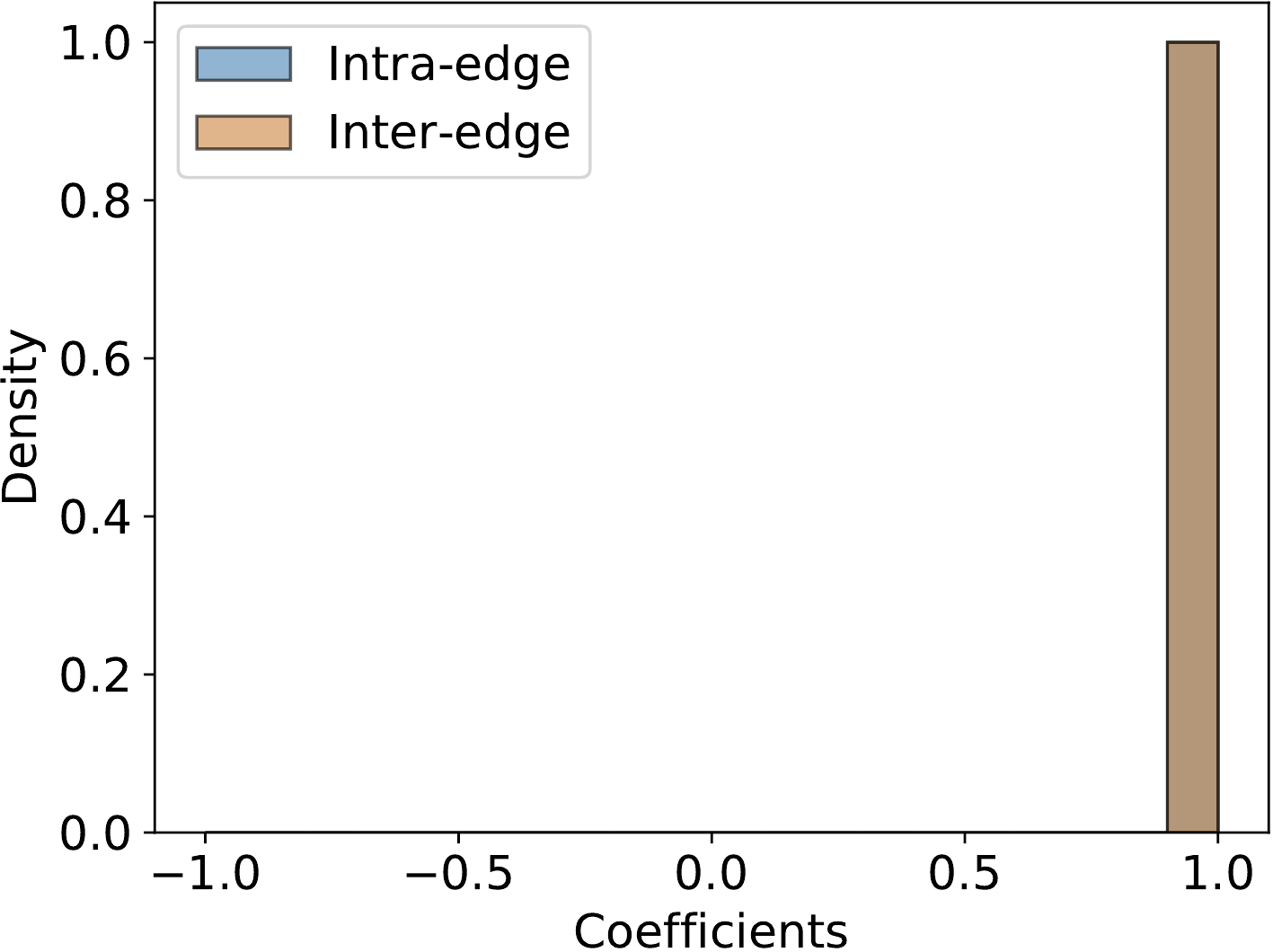}
}
\subfigure[Chameleon]{
\label{chameleon_h}
\includegraphics[width=0.23\textwidth]{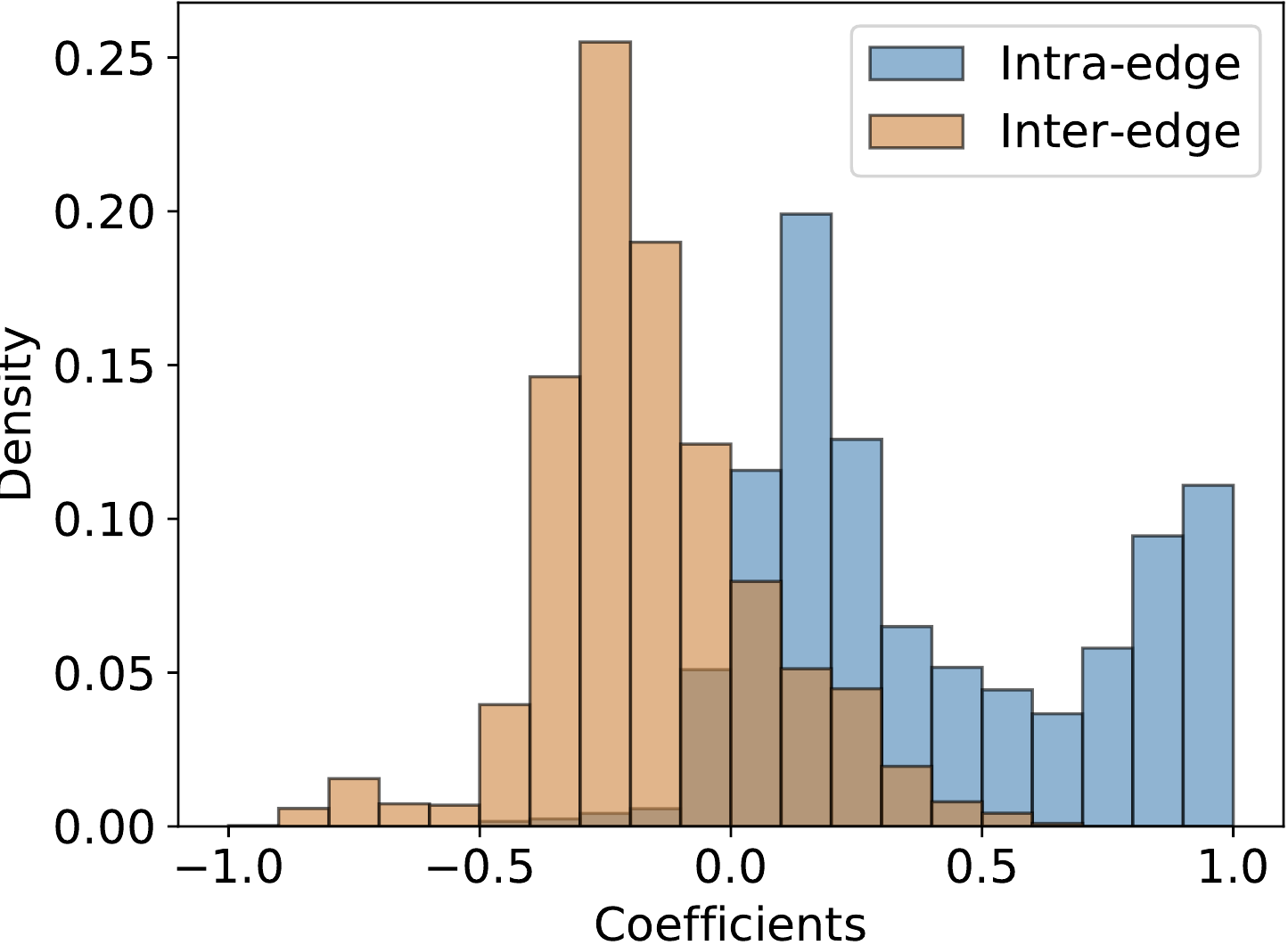}
}
\subfigure[Squirrel]{
\label{squirrel_h}
\includegraphics[width=0.23\textwidth]{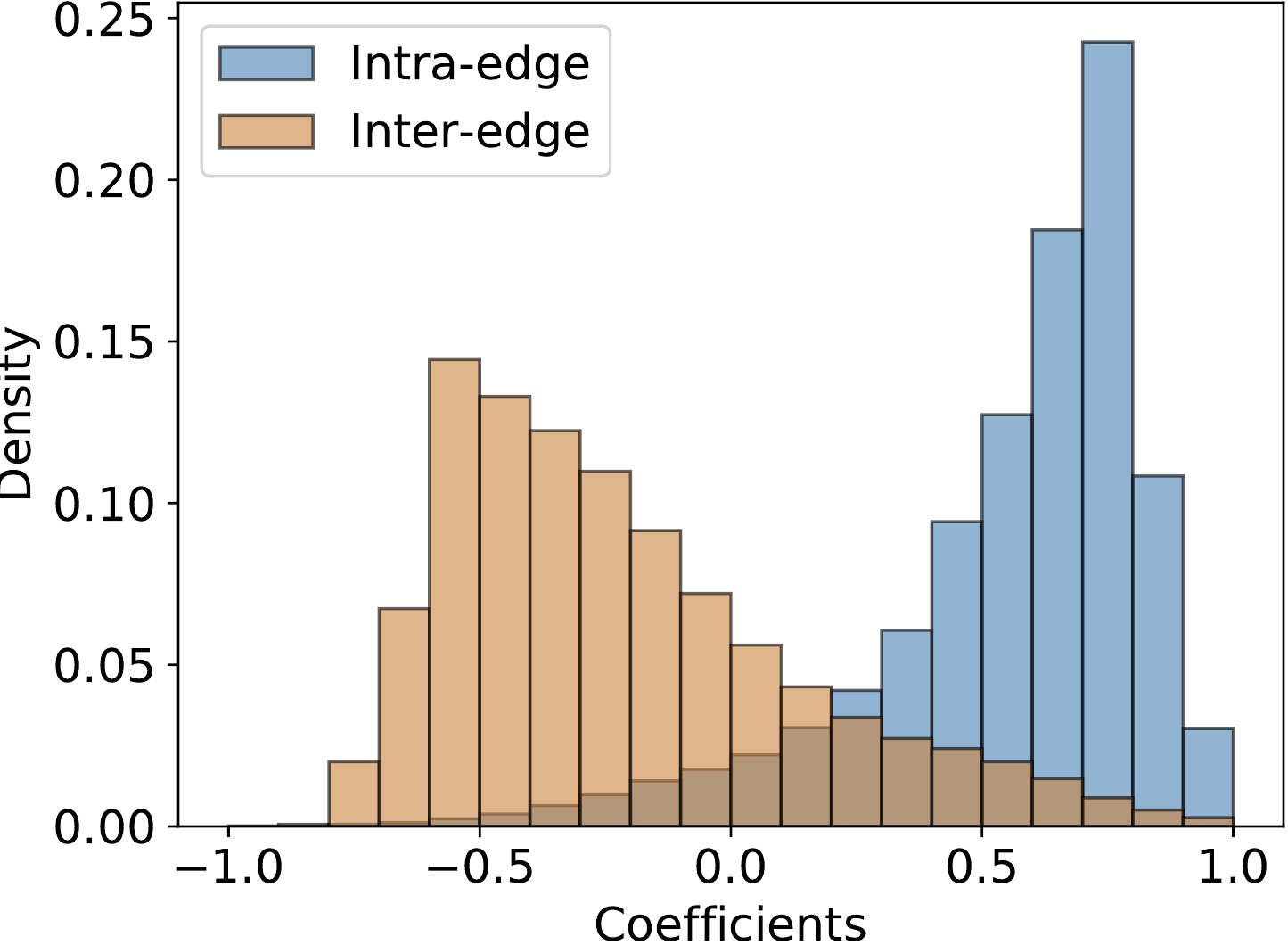}
}
\subfigure[Actor]{
\label{film_h}
\includegraphics[width=0.23\textwidth]{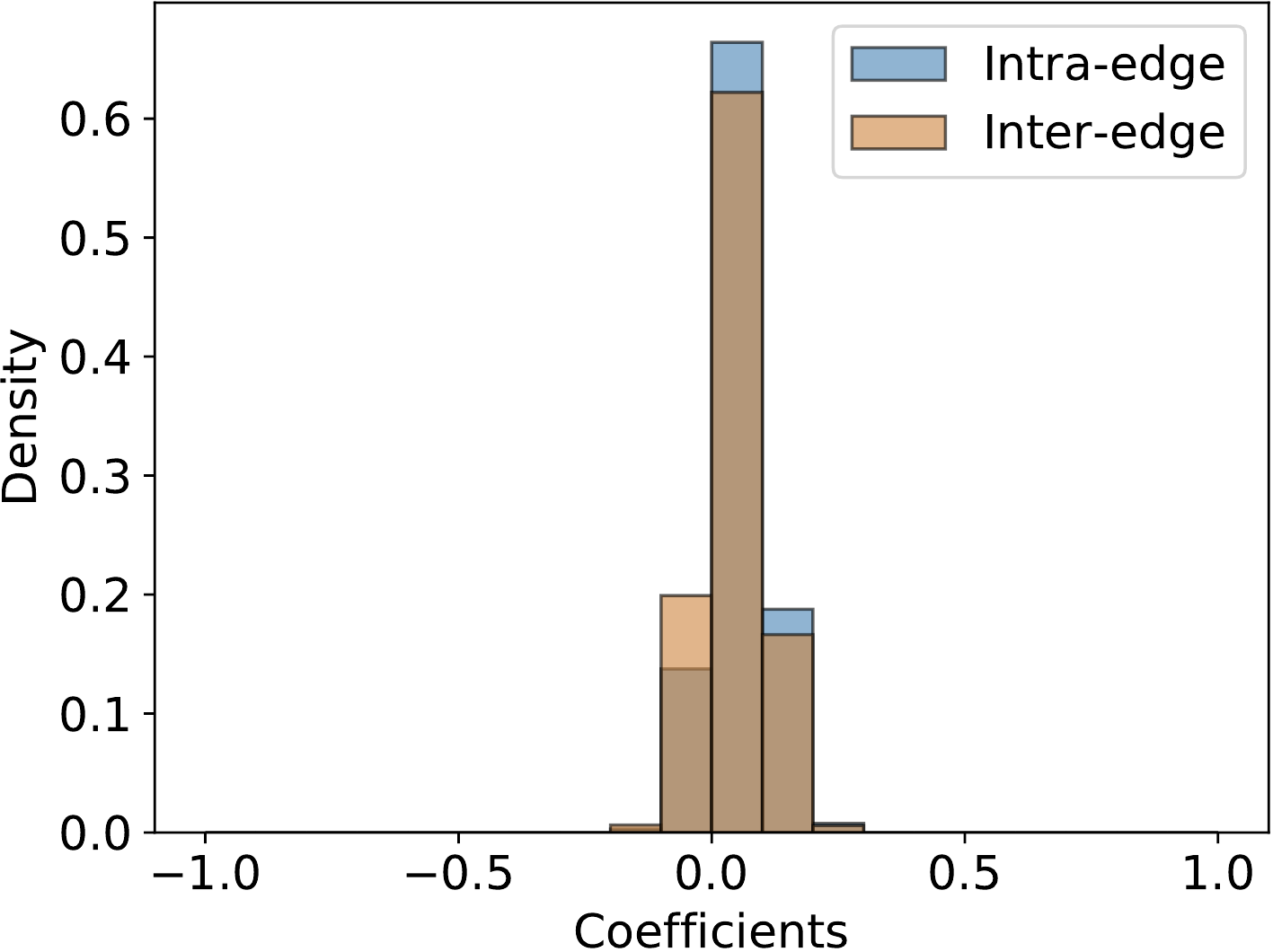}
}
\caption{Visualization of edge coefficients on different networks.}
\label{hist}
\end{figure*}

The performance of different methods on assortative networks is summarized in Table \ref{assortative}. GraphHeat designs a low-pass filer through heat kernel, which can better capture the low-frequency information than GCN \cite{GraphHeat}. Therefore, it performs best in the baselines. But we can see that FAGCN exceed the benchmarks on most networks due to the enhanced low-pass filter, which validates the importance of low-pass filters in the assortative networks. 

Besides, the performance on disassortative networks is illustrated in Fig. \ref{disassortative}. Note that we do not choose all baselines because the methods focus on low-pass filtering have poor performance, and we use GCN and GAT as representatives. In addition, APPNP leverages residual connection to preserve the information of raw features, ChebNet uses ChebNet polynomials to approximate arbitrary convolution kernels and Geom-GCN is the state-of-the-art on disassortative networks. Therefore, comparing FAGCN with these baselines can reflect the superiority of FAGCN. From Fig. \ref{disassortative}, we can see that GCN and GAT perform worse than other methods, which indicates that only using low-pass filters is not suitable for disassortative networks. APPNP and ChebNet perform better than GCN and GAT, which shows that the raw features and polynomials can preserve the high-frequency information to some extent. Finally, FAGCN performs best in most datasets and label rates, which reflects the superiority of our method.

\subsection{Alleviating Over-smoothing Problem}
To validate whether FAGCN can alleviate the over-smoothing problem, we compare the performance of GCN and FAGCN under different model depth. The results are shown in Fig. \ref{over-smoothing}. It can be seen that GCN achieves the best performance at two layers. As the number of layers increases, the performance of GCN drops rapidly, which indicates that GCN suffers from over-smoothing seriously. Instead, the results of FAGCN are stable and significantly higher than GCN on different types of networks. The reasons are two-folds: one is that in Section \ref{sec:expressive} we show that negative weights can prevents node representations from being too similar, which benefits deeper network architecture. Another is that we add the raw features, containing both low-frequency and high-frequency information, to each layer, which further keeps node representations from becoming indistinguishable. Through these two designs, when the model going deep, the performance of FAGCN is significantly better than GCN, which indicates that FAGCN has a good capability to alleviate over-smoothing.



\subsection{Visualization of Edge Coefficients}
\label{visualexperiment}

In order to verify whether FAGCN can learn different edge coefficients to adapt to different networks, we visualize the coefficients $\alpha_{ij}^{G}$, extracted from the last layer of FAGCN.
Specifically, we divide the edges into intra-edges and inter-edges based on whether two connected nodes have the same label. 
It can be seen from Fig. \ref{cora_h} that in the networks with large assortativity, i.e., Cora, Citeseer and Pubmed, all edges are concentrated at the positive weights, which implies that the low-pass filter plays a major role in classification.
However, in Fig. \ref{chameleon_h} and \ref{squirrel_h}, a lot of inter-edges are distributed in negative weights, which shows that in the network with small assortativity, the high-frequency signal plays an important role in node classification.
Moreover, there is an interesting phenomenon that in Fig. \ref{film_h} the coefficients of edges are concentrated at zero. One possible reason is that the assortativity of Actor is quite small, which implies that the structures contributes less to the results of node classification, instead, the raw features dominate the classification results.


\subsection{Details of Wikipedia Networks}

In this section, we aim to give more details of Wikipedia networks. First of all, Chameleon and Squirrel were originally collected for regression task, i.e., traffic prediction \cite{wikinet}. We divide the traffic into three categories: \emph{less than 1000}, \emph{between 1000 and 10000} and \emph{more than 10000}, so that they can be applied to node classification task. Secondly, labels of Chameleon and Squirrel are different from those in Geom-GCN. The reason is that in the disassortative networks provided by \cite{GeomGCN}, i.e., Cham-5 and Squi-5 in Table \ref{tab:compare}, we find that GCN performs much better than MLP. This is a strange phenomenon, because MLP uses raw features as input, which contains high-frequency information, so its performance should be better than GCN \cite{H2GNN}. Therefore, we redivide the labels based on traffic, i.e., Cham-3 and Squi-3 in Table \ref{tab:compare}, where the performance of GCN and MLP is more reasonable. Besides, we can see that FAGCN perform best on all four datasets, so its effectiveness is still guaranteed across different datasets.


\begin{table}[t]
  \centering
  \caption{Classification accuracy with different label division.}
    \begin{tabular}{lrrrr}
    \toprule
    \multirow{2}[4]{*}{\textbf{Method}} & \multicolumn{2}{c}{This paper} & \multicolumn{2}{c}{\cite{GeomGCN}} \\
\cmidrule(lr){2-3} \cmidrule(lr){4-5}          & \multicolumn{1}{c}{\textbf{Cham-3}} & \multicolumn{1}{c}{\textbf{Squi-3}} & \multicolumn{1}{c}{\textbf{Cham-5}} & \multicolumn{1}{c}{\textbf{Squi-5}} \\
    \midrule
    FAGCN		& \textbf{76.1\%} & \textbf{66.7\%} & \textbf{61.7\%} & \textbf{39.7\%} \\
    Geom-GCN	& 73.2\% & 63.3\% & 60.9\% & 38.1\% \\
    \midrule
    GCN   		& 72.3\% & 61.9\% & \textbf{59.8\%} & \textbf{36.9\%} \\
    MLP   		& \textbf{74.3\%} & \textbf{63.1\%} & 46.4\% & 29.7\% \\
    \bottomrule
    \end{tabular}
  \label{tab:compare}
\end{table}

\section{Related Work}


\textbf{Spectral Graph Neural Networks.} Spectral GNNs aim to define the convolution kernel in spectral domain, by leveraging the theory of graph signal processing. Spectral CNN \cite{SpectralCNN} treats the convolution kernel as a trainable diagonal matrix and directly learns the amplitudes of signals. However, it requires the decomposition of Laplacian matrix, which is inefficient. To deal with this problem, ChebNet \cite{ChebNet} uses the polynomial of Laplacian matrix to approximate the convolution kernel and make better performance. GCN \cite{GCN} is the first-order approximation of ChebNet with a self-loop mechanism. GraphHeat\cite{GraphHeat} designs a more powerful low-pass filter through heat kernel. Besides, GWNN \cite{GWNN} replaces the eigenvectors with wavelet bases so as to further improve the efficiency of the model. Generally, spectral methods have good interpretability for the signal processing on graph, but lack generalization \cite{GraphSAGE}.

\textbf{Spatial Graph Neural Networks.} Spatial GNNs focus on the design of aggregation function. GraphSAGE \cite{GraphSAGE} designs a permutation-invariant aggregator for message passing; GAT \cite{GAT} employs self-attention to calculate the coefficients of neighbors in aggregation; MoNet \cite{MoNet} provides a unified generalization of graph convolutional architectures in spatial domain; PPNP \cite{PPNP} incorporates personalized PageRank to the aggregation function;
Geom-GCN \cite{GeomGCN} utilizes the structural similarity to capture the long-range dependencies in disassortative graphs.
H2GNN \cite{H2GNN} separates the raw features and aggregated features so as to preserve both high-frequency and low-frequency information, but it lacks adaptability;
Non-Local GNN \cite{Non-Local} designs an attention-guided sorting mechanism to transform the disassortative networks into assortative networks, which costs a lot of computations.
Generally, spatial methods are more flexible and scalable, but lack interpretability.
It is worth noting that FAGCN is a spatial method, but it still has good interpretability, which combines the advantages of both spectral and spatial methods.

\section{Conclusion}

In this paper, we make the attempt to study the roles of low-frequency and high-frequency signals in GNNs and show that both of them are helpful in learning node representations. Based on this observation, we design a novel frequency adaptation graph convolutional network to adaptively combine the low-frequency and high-frequency signals. Theoretical analysis shows that the expressive power of our model is greater than most existing GNNs.
An important direction of future work is to use more signals with different frequencies, e.g., the intermediate frequency signals.

\section{Acknowledgments}
This work is supported in part by the National Natural Science Foundation of China (No. U1936220, U1936104, 61772082, 61702296, 62002029, 61972442), Meituan-Dianping Group and BUPT Excellent Ph.D. Students Foundation (No. CX2020115). Huawei Shen is funded by Beijing Academy of Artificial Intelligence (BAAI).

\bibliography{FAGCN}

\end{document}